\begin{document}

\begin{frontmatter}

% "Title of the Paper"
\title{Provably Training Overparameterized Neural Network Classifiers with Non-convex Constraints}
\runtitle{Training Neural Networks with Non-convex Constraints}

% indicate corresponding author with \corref{}
% \author{\fnms{John} \snm{Smith}\thanksref{t1}\corref{}\ead[label=e1]{smith@foo.com}\ead[label=e2,url]{www.foo.com}}
% \thankstext{t1}{Thanks to somebody} 
% \address{line 1\\ line 2\\ \printead{e1}\\ \printead{e2}}

\author{\fnms{You-Lin} \snm{Chen}}
\address{Department of Statistics\\
	The University of Chicago\\
	5747 S. Ellis Ave.\\
	Chicago, IL 60637 USA \\ 
	\ead[label=e1]{youlinchen@uchicago.edu}
	\printead{e1}}

\author{\fnms{Zhaoran} \snm{Wang}}
\address{Department of Industrial Engineering and Management Sciences \\
	Northwestern University \\
	2145 Sheridan Road  \\
	Evanston, IL 60208, USA \\
	\ead[label=e2]{zhaoranwang@gmail.com} \printead{e2}}

\author{\fnms{Mladen} \snm{Kolar}}
\address{The University of Chicago Booth School of Business \\
	5807 S. Woodlawn Ave. \\
	Chicago, IL 60637 USA \\
	\ead[label=e3]{mkolar@chicagobooth.edu} \printead{e3}}

\runauthor{Chen, Wang, Kolar}

\begin{abstract}
Training a classifier under non-convex constraints has gotten increasing attention in the machine learning community thanks to its wide range of applications such as algorithmic fairness and class-imbalanced classification. However, several recent works addressing non-convex constraints have only focused on simple models such as logistic regression or support vector machines. Neural networks, one of the most popular models for classification nowadays, are precluded and lack theoretical guarantees. In this work, we show that overparameterized neural networks could achieve a near-optimal and near-feasible solution of non-convex constrained optimization problems via the project stochastic gradient descent. Our key ingredient is the no-regret analysis of online learning for neural networks in the overparameterization regime, which may be of independent interest in online learning applications. 
\end{abstract}

% {\it Keywords:} Neural tangent kernel, overparameterization, online learning, non-convex optimization, game theory
%\begin{keyword}[class=MSC]
%\kwd[Primary ]{}
%\kwd{}
%\kwd[; secondary ]{}
%\end{keyword}

\begin{keyword}
\kwd{Neural tangent kernel}
\kwd{Non-convex constrained optimization}
\kwd{Online learning with non-convex losses}
\end{keyword}

% history:
% \received{\smonth{1} \syear{0000}}

%\tableofcontents

\end{frontmatter}

\section{Introduction}~\label{sec:Introduction}

In many real-world machine learning problems, practitioners are not only interested in the performance of their models but also need to meet societal and legal goals, while taking advantage of side information, prior knowledge, and unlabeled data. For example, in the classification task, fairness metrics with respect to certain sensitive characteristics, such as gender or ethnicity \cite{chouldechova2017fair}, are measured to correct biased training data and improve the accuracy \cite{blum2019recovering}; F-measure, G-mean, and H-mean \cite{daskalaki2006evaluation, lawrence2012neural, kennedy2009learning} are used with class-imbalanced data to prevent trivial solutions; the classifier churn \cite{fard2016launch} is computed to improve prediction stability. 
All metrics mentioned above involve non-convex functions of the prediction output, and one could cast the learning task of training a classification model satisfying the above metrics as a constrained optimization problem. 
Several challenges arise due to non-convex and data-dependent constraints that
were partially addressed in the context of linear models~\cite{fard2016launch, donini2018empirical, komiyama2018nonconvex, narasimhan2019optimizing}.
Neural networks, which are widely used for classification and enjoy tremendous empirical success in practice, are not covered by the theoretical guarantees developed in these papers.

In this paper, we consider the problem of training a neural network classifier under non-convex constraints through the lens of the neural tangent kernel. Since neural network models are non-convex, we face the problem of non-convexity arising from both the constraints and the model. 
To establish the convergence rate in this challenging setting, we follow the minimax optimization strategy studied in \cite{narasimhan2019optimizing}.
Specifically, we formulate the optimization problem with non-differentiable and non-convex constraints as an unconstrained minimax optimization problem using the Lagrangian and then iteratively update model parameters, Lagrange multipliers, and auxiliary variables with convex surrogate functions for non-convex constraints. 
Although the optimization framework in \cite{narasimhan2019optimizing} is rather general, it cannot be directly applied to our problem due to the non-convexity of neural networks.

Our contribution is threefold. First, we prove the first convergence result for overparameterized neural network classifiers with non-convex constraints. In particular, our result states that a neural network classifier can achieve a near-optimal and near-feasible solution as long as the width of the neural network is large enough. Second, a no-regret guarantee is provided for online learning problems with neural networks in the overparameterization regime. The no-regret analysis may be of independent interest for online learning with neural network models and related applications. Our no-regret analysis follows by extending the conventional no-regret analysis to the online mirror descent with stochastic biased gradients. Such biased gradient analysis is studied in machine learning applications, including model-agnostic meta-learning and federated learning~\cite{denevi2019learning, t2020personalized, chen2021theorem}. Finally, unlike the prior work \cite{narasimhan2019optimizing}, our approach does not require any best response function or oracle for optimization. We emphasize that an oracle that provides a near-optimal solution for non-convex problems may not exist in practice, and the projected stochastic gradient descent is a scalable and practical algorithm for large-scale datasets and optimization problems with neural networks. As a result, our approach leads to a simple and computationally efficient procedure.

\subsection{Related Work}

Our work is related to the literature on non-convex constrained optimization, algorithmic fairness, and neural tangent kernels. We summarize the literature most related to the current work and do not attempt to provide an extensive survey.

{\bf Non-convex constrained optimization.}
Non-convex minimization problems with non-convex constraints have been recently studied due to their popularity in the machine learning community~\cite{cartis2017corrigendum, boob2019stochastic}. 
\cite{Na2021Adaptive, Na2021Inequality} developed stochastic algorithms based on sequential quadratic programming that incorporate a stochastic line search and converge globally to a first-order stationary point of the optimization program.
\cite{chen2017robust} and \cite{agarwal2018reductions} used a Lagrangian-based approach with access to an optimization oracle and found a distribution over solutions rather than a pure equilibrium. 
\cite{cotter2019two} interpreted the resulting Lagrangian as a non-zero-sum two-player game.
Their theoretical result guarantees the feasibility regarding the original constraints rather than the proxy constraints. 
Another approach to address non-convex constraints is to study weak notions of convexity \cite{davis2019stochastic}. In particular, \cite{ma2019proximally} studied constrained optimization in which both the objective and the constraints are weakly convex. They solved a sequence of strongly convex subproblems and established the computational complexity to find a nearly stationary point.

{\bf Algorithmic fairness.}
Algorithmic fairness is an important application of our non-convex constrained optimization framework. 
Many papers have studied different approaches to fulfill algorithmic fairness in machine learning~\cite{hardt2016equality, kilbertus2017avoiding, blum2019advancing, celis2019classification}. 
In particular, \cite{zafar2017fairness} included the correlation between the decision boundary and sensitive attributes as a constraint on the learned classifier. 
\cite{donini2018empirical} and \cite{oneto2019general} designed constrained optimization problems that enforce the learned classifiers to have similar errors on the positive class independently of subgroups.

{\bf Neural tangent kernel.} 
There is a considerable body of literature that analyzes deep supervised learning with overparameterized neural networks \cite{zou2018stochastic, neyshabur2018towards, li2018learning, du2019gradient}.
\cite{allen2019learning} and \cite{arora2019fine} studied two-layer neural networks in the overparametrization regime,
where neural networks can be approximated by linear models with the neural tangent kernel~\cite{jacot2018neural, lee2019wide, alemohammad2020recurrent}. 
This phenomenon of local linearization provides a powerful tool to circumvent the obstacle of the non-convexity of neural networks and establish convergence properties. 
\cite{chizat2019lazy} and \cite{allen2019convergence} showed that neural networks approximate a subset of the reproducing kernel Hilbert space induced by some kernels. 
See \cite{fan2019selective} for a recent review.
Compared to previous work, we extend the approximation results of neural networks to the setting of online learning.
Based on our new technique, we show that neural networks can be trained under non-convex constraints and the corresponding convergence rate.

\subsection{Notations}
 
We use bold capital letters ($\Ab, \Bb, \dots$) to denote matrices, bold lowercase letters ($\ab, \bb, \dots$) to denote column vectors, and lowercase or uppercase letters ($a, b, \dots A, B, \dots$) for constants. 
With slight abuse of notation, we use bold lowercase ($\fb, \gb, \dots$) and lowercase ($f, g, h, \dots$) letters for a mapping or random element whose codomain is $\RR^n$ and $\RR$, respectively, with an exception $\Lcal$ which is used for Lagrangian function. We write $\ab = (a_1, \dots, a_m)^\top$.

For a continuous function $f:\operatorname{dom} f \subset \RR^n \rightarrow \RR$, where $\operatorname{dom} f$ is the domain of $f$,  $\nabla f(\xb)$ denotes the gradient of $f$ evaluated at $\xb \in \operatorname{dom} f$. Given a vector $\xb=(x_1,  \dots, x_n)^\top \in \RR^n$, $\nabla_{x_i} f(\xb)$ denotes the partial derivative of $f$ corresponding to the variable $x_i$. We use $\|\cdot\|_2$ and $\|\cdot\|_\infty$ to denote the $\ell_2$-norm and the infinity norm, respectively. For a general norm $\|\cdot\|$, we use $\|\cdot\|_\ast$ to denote  its dual norm.

Given a set $\Ecal=\{e_1, \dots, e_k\}$ and a probability vector $\pb \in \RR^k$ such that $\sum_{i=1}^k p_i =1$, $\text{categorical}(\Ecal, \pb)$ is the categorical distribution on $\{1, \dots, k\}$ such that $\PP(e_i)=p_i$, and $\text{Uniform}(\Ecal)$ is the discrete uniform distribution on $\Ecal$ with size $k$ such that $\text{Uniform}(\Ecal) = \text{categorical}(\Ecal, (1/k, \dots, 1/k))$. We use $[n]$ to denote the set $\{1,2, \dots, n\}$.

For a subset $\Scal \subset \RR^n$, the operator $\Pi_S$ denotes the projection to $S$ with respect to the Euclidean norm $\|\cdot\|_2$. Given two sequences $a_k, b_k$, we write $a_k=O(b_k)$ if there exists a positive real number $c$ such that $a_k\leq c b_k$ for all large enough $k$.

%Given a filtration ($\Fcal_1 \subset \dots \subset \Fcal_t$), where $\Fcal_i$ is a $\sigma$-algebra and contain information up to time $i$, we let $\hat \nabla_t F$ be the unbiased estimation of the gradient at time $t$ such that $\EE \left[\hat \nabla_t F (x_t) \mid \Fcal_t\right] = \nabla F(x_t)$ where $x_t$ is $\Fcal_t$-measurable. We will simply write $\hat \nabla F$ for $\hat \nabla_t F$ when there is no risk of confusion. 

\subsection{Organization}

The rest of the paper is organized as follows. Section~\ref{sec:Problem-Setup} sets up our problem and a minimax optimization framework as well as the details of the optimization procedure to solve our problem. Section~\ref{sec:Main-Results} discusses our main convergence results in this paper, and Section~\ref{sec:Experiments} presents numerical experiments on real-world datasets. Conclusion and the future work is provided in Section~\ref{sec:Discussion}.

\section{Problem Setup}~\label{sec:Problem-Setup}

We start by introducing a general formulation of a non-convex constrained optimization problem. Subsequently, we present few concrete examples that can be studied in the general framework. Finally, we discuss the neural network predictor as well as a minimax optimization framework to solve the non-convex constrained optimization problem.

\subsection{Optimization Formulation}

Let $\xb$ denote a feature vector in an instance space $\Xcal \subset \RR^{d}$ and 
let $z$ denote a label in a label space $\Zcal = \{-1,1\}$.
The goal is to train a model $y(\thetab; \xb)$ by finding a parameter $\thetab$ in a parameter space $\Theta$ that minimizes the following constrained optimization problem:
\begin{equation} \label{eq:optimization-2}
    \begin{split}
        \min_{\thetab \in \Theta} & \ \EE_{(\xb,z)\sim\Dcal_0} [h_0(y(\thetab;\xb), z)] \\
        \text{s.t.}
        & \ g_j \left( \EE_{(\xb,z) \sim \Dcal_1} [h_1(y(\thetab;\xb), z)], \dots, \EE_{(\xb,z) \sim \Dcal_K} [h_K(y(\thetab;\xb), z)] \right) \leq 0, \ j \in[J],
    \end{split}
\end{equation}
where $\{\Dcal_k\}_{k=0}^K$ are some distributions on $\Xcal \times \Zcal$,
$d$ is the dimension of the instance space,
$h_0$ is a convex loss function that evaluates the distance between the model $y(\thetab; \xb)$ and the label $z$ (for example, hinge loss). 
The formulation in \eqref{eq:optimization-2} is inspired by a wide range of applications including algorithmic fairness \cite{hardt2016equality, donini2018empirical}, 
class-imbalance classification \cite{kennedy2009learning, kubat1997addressing},
KL-divergence based metrics used in quantification tasks \cite{gao2015tweet, esuli2015optimizing}. 
We will discuss concrete examples of the optimization problem \eqref{eq:optimization-2} in detail in the next section.

There are three major difficulties in solving the optimization problem in~\eqref{eq:optimization-2}:
(i) non-convexity of $h_k$,
(ii) data-dependent constraints which may be computationally challenging to check,
(iii) properly sampling distributions $\Dcal_{k}$.
The third problem is beyond the scope of this paper and we present only computational issues related to formulation \eqref{eq:optimization-2}. 
We tackle the first two challenges by converting \eqref{eq:optimization-2} into a minimax optimization, and, in the corresponding Lagrangian, we replace $h_k$ by its convex upper bound, which will be discussed in detail at the end of this section. 
A similar idea is used when solving different optimization problems, such as the generalized eigenvalue problem \cite{chenonline} and the canonical correlation analysis \cite{chen2021tensor} that also involve data-dependent constraints. However, our ultimate interest is in neural networks, which require different tools from those in the existing literature.

\subsection{Concrete Examples} \label{subsec:motivation-examples}

We present practical examples that motivate our optimization problem in~\eqref{eq:optimization-2}. Note that our theory is capable but not limited to solving those examples in this section, as our non-convex constrained optimization is general enough for many other real-world applications.

{\bf Classical Classification.} The first example is the classification task aiming to minimize the following accuracy (0-1 loss) without any constraints
\begin{equation}
    \min_{\thetab} \EE_{(\xb,z)\sim \Dcal_p} \ind \{z = \sgn(y(\thetab; \xb))\},
\end{equation}
where $\Dcal_p$ is defined as the population distribution of a data set, 
$\ind$ is the indicator function, and $\sgn$ is the sign function.
Since minimizing the 0-1 loss problem is an NP-hard problem~\cite{feldman2012agnostic}, the conventional solution is to replace the 0-1 loss by a hinge loss, which is a convex upper bound for the 0-1 loss~\cite{hastie2009elements}. The same idea can be applied to constraint functions $h_k$ when considering the optimization problem in \eqref{eq:optimization-2} in its Lagrangian form.

{\bf Algorithmic Fairness.} 
In many classification problems, such as the approval of a loan or admission to a college,
the outcome $z=1$ is often treated as the {\it advantaged} outcome.
Let $\Acal \subset \Xcal$ be a protected subgroup, and let $D_\Acal$ and $D_{\Acal^c}$ denote the conditional distributions of $(\xb, z)$ given $\xb \in \Acal, z =1$ and $\xb\in \Acal^c, z =1$, respectively. With this notation, the probabilities that protected and unprotected subgroups get advantaged outcome are
\begin{equation}
    \EE_{ (\xb, z) \sim \Dcal_{\Acal}} \ind \{z = \sgn(y(\thetab; \xb))\}, \ \ \
    \EE_{ (\xb, z) \sim \Dcal_{\Acal^c}} \ind \{z = \sgn(y(\thetab; \xb))\}.
\end{equation}
In the literature on algorithmic fairness, 
the predictor $y(\thetab; \xb)$ is said to satisfy {\it equal opportunity}~\cite{hardt2016equality} if
\begin{equation} \label{eq:fairness}
    \EE_{ (\xb, z) \sim \Dcal_{\Acal}} \ind \{z = \sgn(y(\thetab; \xb))\} =
    \EE_{ (\xb, z) \sim \Dcal_{\Acal^c}} \ind \{z = \sgn(y(\thetab; \xb))\},
\end{equation}
which means that the recall on different subgroups should be aligned and the probability that the protected and unprotected subgroups get advantaged outcome should be equal. 
Note that removing sensitive features usually cannot result in equal opportunity, as there may exist other features that are highly correlated with the sensitive features.

Training a classifier that satisfies the fairness requirement stated in~\eqref{eq:fairness} can be cast as
\begin{equation} \label{eq:optimization-fairness}
    \begin{split}
        \min_{\thetab \in \Theta} & \  \EE_{(\xb,z)\sim \Dcal_p} \ind \{z = \sgn(y(\thetab; \xb))\}, \\
        \text{s.t.} & \  \EE_{ (\xb, z) \sim \Dcal_{\Acal}} \ind \{z = \sgn(y(\thetab; \xb))\} =
        \EE_{ (\xb, z) \sim \Dcal_{\Acal^c}} \ind \{z = \sgn(y(\thetab; \xb))\},
    \end{split}
\end{equation}
which can be rewritten as
\begin{equation}
    \begin{split}
        \min_{\thetab \in \Theta} & \  \EE_{(\xb,z)\sim \Dcal_p} \ind \{z = \sgn(y(\thetab; \xb))\}, \\
        \text{s.t.}
        & \ \EE_{ (\xb, z) \sim \Dcal_{\Acal}} \ind \{z = \sgn(y(\thetab; \xb))\} + \EE_{ (\xb, z) \sim \Dcal_{\Acal^c}} [-\ind \{z = \sgn(y(\thetab; \xb))\}] \leq 0, \\
        & \ \EE_{ (\xb, z) \sim \Dcal_{\Acal}} [-\ind \{z = \sgn(y(\thetab; \xb))\}] + \EE_{ (\xb, z) \sim \Dcal_{\Acal^c}} \ind \{z = \sgn(y(\thetab; \xb))\} \leq 0.
    \end{split}
\end{equation}
Therefore, training a classifier that meets the fairness requirement by minimizing 
\eqref{eq:optimization-fairness} is covered by the formulation in~\eqref{eq:optimization-2}.

{\bf Imbalance data.} 
When the data have class imbalance (for example, when $\PP(z=1)$ is small), 
finding a classifier that optimizes only the classification accuracy 
can lead to a trivial model $y(\thetab; \xb)$ that outputs only the majority class, for example, $y(\thetab; \xb) = -1$ for all $\xb$. 
Several evaluation metrics that involve precision and recall, 
such as the F-score, $2 /(\text{recall}^{-1}+\text{precision}^{-1})$, 
are remedies for naive accuracy in the class-imbalanced classification task. 
Furthermore, let $\operatorname{TPR} = \EE_{ (\xb, z) \sim \Dcal_{+}} \ind \{z = \sgn(y(\thetab; \xb))\}$ and $\operatorname{TNR} = \EE_{ (\xb, z) \sim \Dcal_{-}} \ind \{z = \sgn(y(\thetab; \xb))\}$ be the true positive and true negative rates corresponding to the predictor $y(\thetab; \xb)$, where $\Dcal_{+}$ and $\Dcal_{-}$ are the conditional distribution of $(\xb, z)$ given $z=1$ and $z=0$, respectively. 
Then, choosing $g_1(\xib) = 1-\sqrt{\xi_1 \xi_2}$ and $g_1(\xib) = 1-2/(1/\xi_1+1/\xi_2)$, we recover $\operatorname{G-mean}=1-\sqrt{\text{TPR}\times\text{TNR}}$ \cite{kubat1997addressing} and $\operatorname{H-mean}=1-2/(1/\text{TPR}+1/\text{TNR})$ \cite{kennedy2009learning}, which shows that class-imbalanced classification problems are included in the formulation~\eqref{eq:optimization}.

\subsection{Prediction Model: Two-layer Neural Network}

We focus on a two-layer neural network model 
\begin{equation} \label{eq:NN}
    y(\thetab;\xb) = \frac{1}{\sqrt{m}} \sum_{i=1}^m b_i \sigma(\ab_i^\top \xb),
\end{equation}
where $m$ is the width of the neural network, 
$b_i \in \{-1,1\}$ are the output weights, 
$\sigma(u) = \max(0,u)$ is the rectified linear unit (ReLU) activation function, 
and $\thetab=(\ab_1^\top,\dots, \ab_m^\top)^\top \in \Theta \subset \RR^{md}$ is the input weight.
Figure~\ref{fig:NN} provides a visualization of a two-layer neural network. 
As $m\rightarrow \infty$, the class of functions defined in \eqref{eq:NN} approximates a subset of the reproducing kernel Hilbert space induced by the kernel
\[
K(\xb_1,\xb_2)=\EE_{\ab \sim \Ncal(\zero,\Ib_d/d)} [\ind\{\ab^\top \xb_1>0\}\ind\{\ab^\top \xb_2>0\} \xb_1^\top \xb_2].
\]
This function class is sufficiently rich, if the width $m$ and the radius $D$ are sufficiently large \cite{arora2019fine}.  

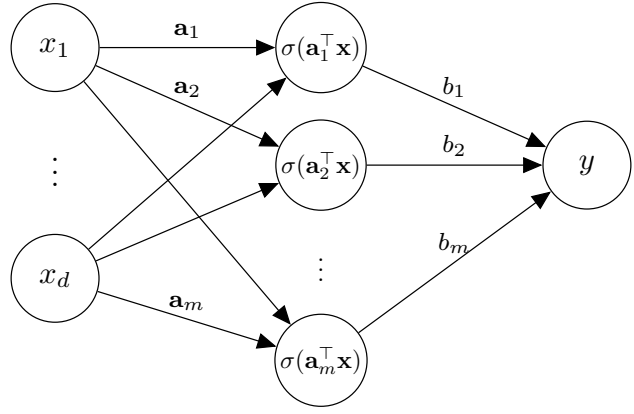
\begin{figure*}[t]
    \centering
    \resizebox{0.7\textwidth}{!}{
        \begin{tikzpicture}
            % Define nodes
            \node[latent, minimum size=1cm]                  (x1) {$x_1$};
            \node[latent, below=0.3cm of x1, minimum size=1cm, draw=none] (x2) {$\vdots$};
            \node[latent, below=0.3cm of x2, minimum size=1cm] (x3) {$x_d$};
            
            \node[latent, right=2cm of x1, scale=0.8] (ax1) {$\sigma(\ab_1^\top \xb)$};t
            \node[latent, below=0.3cm of ax1, scale=0.8] (ax2) {$\sigma(\ab_2^\top \xb)$};
            \node[latent, below=0.3cm of ax2, draw=none, scale=0.8] (ax4) {$\vdots$};
            \node[latent, below=0.3cm of ax4,scale=0.8] (ax5) {$\sigma(\ab_m^\top \xb)$};
            
            \node[latent, right=2cm of ax2, minimum size=1cm] (y) {$y$};
            
%            \node[obs, left=1cm of Y, yshift=0.65cm] (X) {$X$};
%            \node[obs, left=1cm of Y, yshift=-0.65cm]  (W) {$W$};
%            % \node[latent, left=1cm of X] (Z) {$Z$};
%            \node[latent, left=1cm of X, yshift=-0.65cm] (Z) {$Z$};
%            % \node[obs, left=1cm of W] (S) {$S$};
%            
            % factors
            %\factor[right=0.5cm of x1, yshift=0.35cm, minimum size=0.5pt] {xax1} {above:$\ab_1$} {} {};

%            %\factor[left=0.5cm of X, minimum size=0.5pt] {ZXfactor} {above:$\Bb$} {} {};
%            \factor[left=0.5cm of X, yshift=-0.35cm, minimum size=0.5pt] {ZXfactor} {above:$\Bb$} {} {};
%            \factor[left=0.5cm of W, yshift=0.35cm, minimum size=0.5pt] {ZWfactor} {below:$\Ab$} {} {};
%            % \factor[left=0.5cm of W, yshift=0.65cm, minimum size=0.5pt] {ZWfactor} {below:$\Ab$} {} {};
            % Connect the nodes
            %\factoredge {x1} {xax1} {ax1} ; %
            %\factoredge {Z} {ZXfactor} {X} ; %
            %\factoredge {Z} {ZWfactor} {W} ; %
            \path[->,draw]
            (x1) edge node[above] {$\ab_1$} (ax1)
            (x3) edge node[above] {} (ax1)
            (x1) edge node[above] {$\ab_2$} (ax2)
            (x3) edge node[above] {} (ax2)
            (x1) edge node[above] {} (ax5)
            (x3) edge node[above] {$\ab_m$} (ax5)
            (ax1) edge node[above] {$b_1$} (y)
            (ax2) edge node[above] {$b_2$} (y)
            (ax5) edge node[above] {$b_m$} (y);
        \end{tikzpicture}
    }
    \caption{The visualization of the two-layer neural network.} \label{fig:NN}
\end{figure*}

The neural network is initialized by the following scheme
\begin{equation}
    b_i \sim \text{Uniform}(\{-1,1\}), \quad \ab_i^0 \sim \Ncal(\zero, \Ib_d/d),
\end{equation}
where $\Ncal(\zero, \Ib_d/d)$ denotes the multivariate normal distribution with mean $\zero$ and covariance matrix $\Ib_d/d$ and $\Ib_d$ is the identity matrix of size $d$. 
During training, $\thetab$ is restricted to the search space $\Thetab=\{\thetab:\| \thetab - \thetab^0\|_2 \leq D\}$, where $D$ is a predefined constant. That is,
$\thetab$ is restricted to an $\ell_2$ ball centered at initialization $\thetab^0= ((\ab_1^0)^\top, \dots, (\ab_m^0)^\top)^\top$ and $b_i$ is fixed for simplicity and omitted from $\thetab$. 
Such a setup is commonly used in the literature~\cite{allen2019What,allen2019learning, allen2019ConvergenceRate, oymak2019towards}. 
Note that $\ab_i^0 \sim \Ncal(\zero, \Ib_d/d)$, which is different from the literature that usually sets $\ab_i^0 \sim \Ncal(\zero, \Ib_d/m)$. 
In our setting, $\|\thetab^0\|$ does not converge to a fixed number in probability as $m \rightarrow \infty$. Instead, we prove that $y(\thetab;\xb)$ lies in a compact domain with high probability. This observation is crucial for the regret analysis of online learning. See Proposition~\ref{prop:y-bd}.

\subsection{A Minimax Optimization Framework and its Optimization Procedure}

We introduce a minimax optimization framework to solve the optimization problem~\eqref{eq:optimization-2}. In particular, the Lagrangian multiplier method is applied to recast the constrained optimization problem into an unconstrained minimax problem.
We introduce auxiliary variables and rewrite \eqref{eq:optimization-2} as 
\begin{equation} \label{eq:optimization}
    \begin{split}
        \min_{\thetab \in \Theta, \xib \in \Xi} & \ r_0(\thetab) \ \
        \text{s.t.}
        \ \gb \left( \xib \right) \leq \zero,\ \rb(\thetab) \leq \xib,
    \end{split}
\end{equation}
where $\gb = (g_1, \dots, g_J)^\top$, $r_k(\thetab) = \EE_{(\xb,z) \sim \Dcal_k} [h_k(y(\thetab;\xb);z)]$, $\rb=(r_1, \dots, r_K)^\top$, and $\Xi$ is the space of $\xib$, which is defined in \eqref{eq:domain} below.
Assumption~\ref{asp:1}, given in the following section, further imposes conditions that ensure that \eqref{eq:optimization} and \eqref{eq:optimization-2} are equivalent. 
The Lagrangian corresponding to \eqref{eq:optimization} is
\begin{equation} \label{eq:Lagrangian-1}
    \Lcal(\thetab, \xib, \lambdab)
    = r_0(\thetab) + \sum_{j=1}^J \lambda_j g_j(\xib) - \sum_{k=1}^K \lambda_{J+k} \xi_k
    + \sum_{k=1}^K \lambda_{J+k} r_k(\thetab),
\end{equation}
where
$\lambdab=(\lambda_1,\dots, \lambda_J, \lambda_{J+1}, \dots, \lambda_{K+J})^\top \geq \zero$ is the vector of Lagrange multipliers corresponding to the $K$ auxiliary variables and $J$ constraints.
Since the functions $h_k$ are non-convex, we assume that there exist convex surrogate functions $\tilde{h}_k$ which upper bound $h_k$, that is, 
$\tilde{h}_k$ are convex and $h_k(y, z) \leq \tilde{h}_k(y, z)$ for all $(y,z) \in \RR \times \Zcal$. 
Such convex surrogate functions generally exist. 
For example, the hinge loss is the convex surrogate function for the 1-0 loss that satisfies this requirement.
We assume that $\tilde{h}_k$ are differentiable, but it is not difficult to extend our 
argument to non-differentiable surrogate functions by using subgradient method. 
Finally, we also define $\tilde{r}_k(\thetab) = \EE_{(\xb,z) \sim \Dcal_k} [\tilde{h}_k(y(\thetab;\xb);z)]$.

A key observation is that the surrogate functions $\tilde{h}_k$ are needed only to optimize the Lagrangian with respect to $\thetab$. 
This observation plays an important role in obtaining guarantees regarding the true feasible sets. Letting
\begin{equation} \label{eq:Lagrangian-2}
\begin{split}
    \Lcal_1(\xib, \lambdab) &= \sum_{j=1}^J \lambda_j g_j(\xib) - \sum_{k=1}^K \lambda_{J+k} \xi_k, \ \
    \Lcal_2(\thetab, \lambdab) = r_0(\thetab) + \sum_{k=1}^K \lambda_{J+k} r_k(\thetab), \\
    \tilde{\Lcal}_2(\thetab, \lambdab) &= r_0(\thetab) + \sum_{k=1}^K \lambda_{J+k} \tilde{r}_k(\thetab), \ \ \tilde{\Lcal} = \Lcal_1 + \tilde{\Lcal}_2,
\end{split}
\end{equation}
we can solve \eqref{eq:optimization} by alternating the following steps: 
minimizing $\tilde{\Lcal}_2$ with respect to $\thetab$, 
minimizing $\Lcal_1$ with respect to $\xib$, and 
maximizing $\Lcal$ with respect to $\lambdab$. 
In particular, the projected stochastic gradient descent is used in each of these steps to update the parameters $\thetab$, $\xib$, and $\lambdab$. 
Given samples $\xb_{t+1}, z_{t+1} \sim \Dcal_0$ and $\xb_{t+1, k}, z_{t+1, k} \sim \Dcal_k$, 
the unbiased estimators of gradients $\nabla_{\thetab} \tilde{\Lcal}_2(\thetab,\lambdab)$ and $\nabla_{\lambdab} \Lcal(\thetab, \xib,\lambdab)$ are
\begin{equation} \label{eq:grad-est}
    \begin{split}
        \hat{\nabla}_{\thetab} \tilde{\Lcal}_2(\thetab,\lambdab)
        &= \nabla_{\thetab} h_0(y(\thetab;\xb_{t+1}), z_{t+1}) + \sum_{k=1}^{K} \lambda_{J+k} \nabla_{\thetab} \tilde{h}_k(y(\thetab;\xb_{t+1, k}),z_{t+1,k}), \\
\hat{\nabla}_{\lambdab} \Lcal(\thetab, \xib,\lambdab)
        &= 
        \begin{pmatrix}
        g_1(\xib) \\
        \vdots \\
        g_J(\xib) \\
        h_1(y(\thetab;\xb_{t+1, 1}),z_{t+1, 1}^\prime)  - \xi_{1} \\
        \vdots \\
        h_K(y(\thetab;\xb_{t+1, K}),z_{t+1, K})  -\xi_{K}        
        \end{pmatrix},
    \end{split}
\end{equation}
and the gradient $\nabla_{\xib} \Lcal_1(\xib,\lambdab)$ can be obtained as
\begin{equation} \label{eq:grad}
    \nabla_{\xib} \Lcal_1(\xib,\lambdab)
        = \left( \sum_{j=1}^J \lambda_{j} \nabla_{\xi_1} g_j(\xib) - \lambda_{J+1}, \dots, \sum_{j=1}^J \lambda_{j} \nabla_{\xi_K} g_j(\xib) - \lambda_{J+K} \right)^\top.
\end{equation}
We summarize the projected stochastic gradient descent in Algorithm~\ref{alg:surrogat-based-optimizer}.

\begin{algorithm}[!h]
    \SetAlgoLined
    \KwIn{$\thetab^0 \in \Theta, \xib^0 \in \Xi, \lambdab^0 \in \Lambda, \eta_{\theta}, \eta_{\xi}, \eta_{\lambda}$}
    % Stochastic projected gradient descent \;
    \For{$t=1$ \KwTo $T$}{
        Sample $\xb_{t+1}, z_{t+1} \sim \Dcal_0$ and $\xb_{t+1, k}, z_{t+1, k}\ \sim \Dcal_k$\;
        Compute $\hat{\nabla}_{\thetab} \tilde{\Lcal}_2(\thetab^t, \lambdab^t) , \nabla_{\xib} \Lcal_1(\xib^t, \lambdab^t), \hat{\nabla}_{\lambdab} \Lcal(\thetab^t, \xib^t, \lambdab^t)$ via \eqref{eq:grad-est} and \eqref{eq:grad}\;
        $\thetab^{t+1} = \Pi_{\Theta} (\thetab^t - \eta_{\theta} \hat{\nabla}_{\thetab} \tilde{\Lcal}_2(\thetab^t,\lambdab^t))$ \;
        $\xib^{t+1} = \Pi_\Xi (\xib^t - \eta_{\xi} \nabla_{\xib} \Lcal_1(\xib^t,\lambdab^t))$ \;
        $\lambdab^{t+1} = \Pi_\Lambda (\lambdab^t + \eta_{\lambda} \hat{\nabla}_{\lambdab} \Lcal(\thetab^t, \xib^t,\lambdab^t))$ \;
    }
    \caption{Stochastic projected gradient descent} \label{alg:surrogat-based-optimizer}
\end{algorithm}

We end this section by providing an informal statement on
the convergence of the stochastic projected gradient descent
used to train the classifier parameterized by a neural network \eqref{eq:NN}.

\begin{theorem} [Informal main theorem] \label{thm:informal-main-thm}
Suppose that regularity conditions hold and that
parameters are properly tuned. Then, with high probability, we have
    \begin{equation}
        \frac{1}{T} \sum_{t=1}^T r_0(\thetab^t) - \min_{\theta \in \Theta \cap \tilde{\Fcal} } r_0(\thetab) \leq
        O\left( \frac{\kappa D^{3/2}}{\sqrt{T}} + \frac{\kappa D^{5/2}}{m^{1/4}} \right),
    \end{equation}
    and, for $j=1,\dots, J$,
    \begin{equation}
        g_j \left( \frac{1}{T} \sum_{t=1}^T \rb(\thetab^t)  \right)
        \leq O\left(\frac{ D^{3/2}}{\sqrt{T}}+ \frac{ D^2}{\kappa} + \frac{ D^{5/2}}{m^{1/4}} \right),
    \end{equation}
    where $\tilde{\Fcal} = \{\thetab : \max_j g_j(\tilde{\rb}(\theta))\leq 0\}$.
\end{theorem}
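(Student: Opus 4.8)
The plan is to read Algorithm~\ref{alg:surrogat-based-optimizer} as three interleaved online learning procedures --- a $\thetab$-player minimizing $\tilde{\Lcal}_2(\cdot,\lambdab^t)$ by projected SGD, a $\xib$-player minimizing $\Lcal_1(\cdot,\lambdab^t)$ by projected OGD, and a $\lambdab$-player maximizing $\Lcal(\thetab^t,\xib^t,\cdot)$ by stochastic projected gradient ascent --- to bound the regret of each player, and then to combine these bounds into an approximate saddle point of the Lagrangian $\Lcal$, from which the two displayed inequalities follow. The crux, and the only place the overparameterization enters, is the $\thetab$-player, since $\tilde{\Lcal}_2(\cdot,\lambdab^t)$ is genuinely non-convex. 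I would therefore first linearize the network: let $\ell(\thetab;\xb)=y(\thetab^0;\xb)+\langle\nabla_{\thetab}y(\thetab^0;\xb),\thetab-\thetab^0\rangle$ and define $\hat{r}_k(\thetab)=\EE_{\Dcal_k}[\tilde{h}_k(\ell(\thetab;\xb);z)]$, which is convex in $\thetab$. Using Proposition~\ref{prop:y-bd} (boundedness of $y(\thetab;\xb)$ on $\Theta$) together with the standard neural-tangent-kernel estimate $\sup_{\thetab\in\Theta}|y(\thetab;\xb)-\ell(\thetab;\xb)|=O(\mathrm{poly}(D)\,m^{-1/4})$ and the Lipschitzness and smoothness of the surrogates, I would show that $\tilde{r}_k$ and $\hat{r}_k$, and their gradients, differ by $O(\mathrm{poly}(D)\,m^{-1/4})$ uniformly on $\Theta$. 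Consequently the update direction $\hat{\nabla}_{\thetab}\tilde{\Lcal}_2(\thetab^t,\lambdab^t)$ is a \emph{biased} stochastic estimate of $\nabla_{\thetab}\hat{\Lcal}_2(\thetab^t,\lambdab^t)$, the gradient of a convex function, with bias of order $\|\lambdab^t\|\cdot\mathrm{poly}(D)\,m^{-1/4}$.

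Second, I would establish the three regret bounds. For the $\thetab$-player I would apply a regret bound for online (mirror/projected) gradient descent driven by stochastic gradients carrying a deterministic bias --- the biased-gradient analysis alluded to in the introduction --- against the convex comparators $\hat{\Lcal}_2(\cdot,\lambdab^t)$; with $\eta_{\theta}$ of order $D/(\kappa\sqrt{T})$ this yields regret of order $D\kappa\sqrt{T}+D\kappa\,T\,m^{-1/4}$, where the stochastic part is controlled with high probability by a martingale (Azuma--Hoeffding/Freedman) argument using the boundedness of $y$, $\nabla_{\thetab}y$, and the surrogate derivatives furnished by Proposition~\ref{prop:y-bd}. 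For the $\xib$-player, $\Lcal_1(\cdot,\lambdab^t)$ is convex (Assumption~\ref{asp:1}) and its gradient \eqref{eq:grad} is exact, so projected OGD on the bounded set $\Xi$ gives regret $O(\kappa\sqrt{T})$. For the $\lambdab$-player, $\Lcal(\thetab^t,\xib^t,\cdot)$ is \emph{linear}, the estimator $\hat{\nabla}_{\lambdab}\Lcal$ in \eqref{eq:grad-est} is unbiased --- and, crucially, is built from the true losses $h_k$ rather than the surrogates --- and bounded, so stochastic projected gradient ascent on the radius-$\kappa$ set $\Lambda$ gives regret $O(\kappa\sqrt{T})$, again with a high-probability concentration step.

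Third, I would assemble an approximate saddle inequality. Adding the three regret bounds, using linearity of $\Lcal$ in $\lambdab$ and the one-sided inequalities $h_k\le\tilde{h}_k$ and $|\tilde{r}_k-\hat{r}_k|=O(\mathrm{poly}(D)m^{-1/4})$ to pass between $\Lcal_2$, $\tilde{\Lcal}_2$, and $\hat{\Lcal}_2$ in the direction needed, I obtain, for every comparator $\thetab\in\Theta\cap\tilde{\Fcal}$ (with the companion choice $\xib=\tilde{\rb}(\thetab)$, which is feasible for \eqref{eq:optimization} since $\thetab\in\tilde{\Fcal}$ forces $\gb(\tilde{\rb}(\thetab))\le\zero$) and every $\lambdab\in\Lambda$,
\begin{equation*}
\frac1T\sum_{t=1}^T\Big[r_0(\thetab^t)+\sum_{j}\lambda_j g_j(\xib^t)+\sum_{k}\lambda_{J+k}\big(r_k(\thetab^t)-\xi_k^t\big)\Big]\ \le\ r_0(\thetab)+\varepsilon_T,
\end{equation*}
with $\varepsilon_T=O\big(\kappa D^{3/2}T^{-1/2}+\kappa D^{5/2}m^{-1/4}\big)$ (the $m^{-1/4}$ slacks from the linearization aggregate to this order once the $\thetab$-regret bias term is divided by $T$). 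Taking $\lambdab=\zero$ and $\thetab$ a minimizer of $r_0$ over $\Theta\cap\tilde{\Fcal}$ gives the optimality bound at once. For feasibility I take $\lambdab$ with a single nonzero coordinate of size $\kappa$ on the most-violated constraint; since the range of $r_0$ on $\Theta$ is $O(D^2)$, rearranging yields $\max_j\frac1T\sum_t g_j(\xib^t)\le(O(D^2)+\varepsilon_T)/\kappa$ and $\max_k\big(\frac1T\sum_t r_k(\thetab^t)-\frac1T\sum_t\xi_k^t\big)\le(O(D^2)+\varepsilon_T)/\kappa$. Convexity of $g_j$ turns the first estimate into a bound on $g_j(\bar{\xib})$ with $\bar{\xib}=\frac1T\sum_t\xib^t$; the second says $\frac1T\sum_t\rb(\thetab^t)\le\bar{\xib}+\delta\mathbf{1}$ with $\delta=(O(D^2)+\varepsilon_T)/\kappa$; and monotonicity plus Lipschitzness of $g_j$ (Assumption~\ref{asp:1}) give $g_j\big(\frac1T\sum_t\rb(\thetab^t)\big)\le g_j(\bar{\xib})+O(\delta)=O\big(D^{3/2}T^{-1/2}+D^2\kappa^{-1}+D^{5/2}m^{-1/4}\big)$, the factor $\kappa$ in $\varepsilon_T$ cancelling the $1/\kappa$ and leaving only the irreducible $D^2/\kappa$ from the objective range.

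The main obstacle is the first two steps taken together: one must show that running projected SGD on the non-convex surrogate Lagrangian is, up to an $m^{-1/4}$ error, no worse than running biased online projected gradient descent on a convex problem, and that the bias accumulated over $T$ rounds contributes only the stated $m^{-1/4}$ term after normalization --- this is the no-regret analysis for neural networks in the overparameterization regime promised in the introduction. A secondary burden is keeping every boundedness constant (needed both for the biased-gradient regret bound and for the high-probability concentration) uniform over the search ball $\Theta$ and explicit in $D$, which rests on Proposition~\ref{prop:y-bd} and on the fact that, unlike the usual $\Ncal(\zero,\Ib_d/m)$ initialization, here $y(\thetab;\xb)$ --- but not $\thetab$ itself --- stays bounded as $m\to\infty$.
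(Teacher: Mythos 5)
Your proposal is correct and follows essentially the same route as the paper: the three-player online-learning reading of Algorithm~\ref{alg:surrogat-based-optimizer}, the NTK linearization $y^0$ (your first-order Taylor expansion coincides with the paper's feature-map linearization), the biased-gradient mirror-descent regret bound with an Azuma--Hoeffding concentration step, and the Lagrangian assembly with $\lambdab=\zero$ for optimality and a single coordinate of size $\kappa$ on the most-violated constraint for feasibility all match the paper's Propositions~\ref{prop:noise-regret-bound}, \ref{prop:linearization}, \ref{thm:equilibrium-optimization} and Theorem~\ref{thm:OL-NN}. The only cosmetic difference is that the paper proves the linearization error in expectation over $(\xb,\thetab^0)$ rather than uniformly in $\xb$, which is exactly what your bound on $|\tilde{r}_k-\hat{r}_k|$ actually requires.
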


Theorem~\ref{thm:informal-main-thm} is the first result on the convergence of neural networks for non-convex constrained optimization problems.
Since Theorem~\ref{thm:informal-main-thm} provides results in terms of $\frac{1}{T} \sum r_k(\thetab^t)$,
using the convexity of $r_0$ and $g_j$, 
it can be shown that a stochastic classifier, defined in Algorithm~\ref{alg:Stochastic-classifier}, achieves a near-optimal and near feasible solution for the constrained optimization problem in \eqref{eq:optimization-2} after training the model $y(\thetab; \xb)$ for $T$ iterations.
In Algorithm~\ref{alg:Stochastic-classifier}, we select a random parameter after training the model for $T$ iterations, which is common in the literature on non-convex optimization \cite{jain2017non} and online learning \cite{shalev2012online}. 
From Theorem~\ref{thm:informal-main-thm} it follows that when the width $m$ increases, the approximation error converges to zero. 
This result is due to the fact that an infinitely wide neural network is similar to a linear model,
a phenomenon called local linearization that plays an important role in our analysis. 
We provide ingredients for the theoretical analysis and a detailed statement of the main results in the following section.

\begin{algorithm}[tb]
	\SetAlgoLined
	\KwIn{A data point $\xb$, a predictor $y(\thetab;\xb)$, $\lbrace \thetab^{t} \rbrace_{t=1}^T$ and $\pb$ such that $\sum_t p_t =1$} 
	\KwOut{$y(\thetab^i;x)$ where $i \sim \text{categorical}(T, \pb)$}
	\caption{Stochastic classifier} \label{alg:Stochastic-classifier}
\end{algorithm}

\section{Main Results}~\label{sec:Main-Results}

We provide several intermediate steps that are needed to prove Theorem~\ref{thm:informal-main-thm}. 
In particular, we discuss how to find an equilibrium by an online learning procedure with non-convex losses and local linearization of neural networks. 
For simplicity, we only consider the binary classification task and two-layer neural networks. 
However, with additional efforts, our ideas can be extended to multi-class classification and deep neural networks.

\subsection{Achieving a Near-optimal and Near-feasible Solution from Coarse-correlated Equilibrium}

We present a the convergence result for the minimax optimization framework. We need the following assumption on the objective function and constraints.
\begin{assumption} \label{asp:1}
    Let
    \begin{equation} \label{eq:domain}
        \begin{split}
            \Ycal &= \{y: |y|\leq 2D\}, \ \ \
            \Theta =\{\thetab:\|\thetab-\thetab^0 \| \leq D \}, \\
            \Xi &= \left\{\xib: |\xib| \leq \sup_{y\in \Ycal, z \in \Zcal} |\tilde{h}_k(y,z)| \right\},  \ \ \
            \Lambda = \left\{\lambdab \geq \zero:  \| \lambdab \|_{\infty} \leq \kappa \right\} ,
        \end{split}
    \end{equation}
    define the domain of every variable, where $\kappa$ and $D$ are predefined constants.
    The objective function and constraints in \eqref{eq:optimization} 
    satisfy the following restrictions:
    \begin{enumerate}
        \item $h_0$ is differentiable and convex, and $\operatorname{dom} h_k = \RR$;
        \item There exists a constant $C$ such that for all $0\leq k \leq K$
        \begin{equation}
            \sup_{y \in \Ycal, z\in \Zcal} |h_k (y, z)| \leq C;
        \end{equation}
        \item There exist functions $\tilde{h}_k$ such that each $\tilde{h}_k$ is differentiable and convex, $h_k(y, z) \leq \tilde{h}_k(y, z)$ for all $(y,z) \in \RR \times \Zcal$ and $k=1,\dots, K$, and
        \begin{equation}
            \sup_{y \in \Ycal, z\in \Zcal} |\tilde{h}_k (y, z)| \leq C;
        \end{equation}
        \item The function $g_j$ is strictly jointly convex, monotonically increasing in each argument, $\sup_{\xib \in \Xi} |g_j(\xib)|\leq C$, and $L$-Lipschitz with respect to the infinity norm, $\|\cdot\|_\infty$, for all $j=1,\dots, J$. %in $[0,1]^K$ for all $j=1,\dots, K$.
    \end{enumerate}
\end{assumption}
% Without loss of generality, we assume $\kappa \geq 1$ and $L \geq1$.

We have several comments on Assumption~\ref{asp:1}:

(a) The boundedness assumptions for the domain of variables are crucial regular conditions to derive the local linearization in Section~\ref{subsec:NTK} and the no-regret bound for online learning in Section~\ref{subsec:finding-equilibrium-via-online-learning}. As a result, despite the fact that the behavior of local linearization around the initialization is not specific to overparameterized neural networks and may not fully explain their successes \cite{chizat2019lazy}, the boundedness assumption is natural in our setting. In other words, although the weights $\thetab$ are indeed a measure of generalization \cite{neyshabur2018pac} and regularization on weights can improve the generalization \cite{krogh1991simple, ba2016layer, salimans2016weight}, restricting $\thetab$ on $\Theta = \{\thetab:\| \thetab - \thetab^0 \|_2 \leq D \}$ and using projected gradient descent may not be practical for training neural networks. However, since projections into some compact domain are necessary for both online learning and local linearization, projected stochastic gradient descent is a natural choice for optimization in our setting.

(b) The differentiability assumptions are made for simplicity. It is not hard to extend our results to the case without differentiability via a subgradient method.

(c) Note that we do not assume $h_k$ is convex for $k=1,\dots, K$. Therefore, we need the assumption of existence of convex surrogate functions $\tilde{h}_k$, which is also a conventional solution for overcoming non-convexity in practice~\cite{hastie2009elements}. The upper boundedness is important for the guarantee of feasibility. In particular, the upper boundedness condition yields that $r_k(\thetab) \leq \tilde{r}_k(\thetab)$, so the fact that $\lambdab \geq \zero$ implies
\begin{equation} \label{eq:upper-bound}
    \Lcal_2(\thetab, \lambdab) \leq \tilde{\Lcal}_2(\thetab, \lambdab)   \Rightarrow \Lcal(\thetab, \lambdab) \leq \tilde{\Lcal}(\thetab, \lambdab).
\end{equation}

(d) The bound $\sup_{y \in \Ycal, z\in \Zcal} |\tilde{h}_k (y, z)| \leq C$ always holds due to compactness of $\Ycal$ and $\Zcal$ and continuity of $\tilde{h}_k$.

(e) The last assumption ensures that \eqref{eq:optimization} and \eqref{eq:optimization-2} are equivalent.

(f) In spite of its popularity, the F-score is not included in our framework since we cannot find a function $g$ satisfying the last part of Assumption~\ref{asp:1}.

Before discussing our theoretical results, we take algorithmic fairness in Section~\ref{subsec:motivation-examples} as an example to illustrate Assumption~\ref{asp:1}.

\begin{example}
     Consider the classification problem with fairness constraints:
    \begin{equation}
    \begin{split}
        \min_{\thetab \in \Theta} & \  \EE_{(\xb,z)\sim \Dcal_p} h_0(y(\thetab; \xb), z), \\
        \text{s.t.}
        & \ \EE_{ (\xb, z) \sim \Dcal_{\Acal}} \ind \{z = \sgn(y(\thetab; \xb))\} + \EE_{ (\xb, z) \sim \Dcal_{\Acal^c}} [-\ind \{z = \sgn(y(\thetab; \xb))\}] \leq 0, \\
        & \ \EE_{ (\xb, z) \sim \Dcal_{\Acal}} [-\ind \{z = \sgn(y(\thetab; \xb))\}] + \EE_{ (\xb, z) \sim \Dcal_{\Acal^c}} \ind \{z = \sgn(y(\thetab; \xb))\} \leq 0.
    \end{split}
\end{equation}
In this classification problem, we have
\begin{equation}
\begin{split}
    h_1(y(\thetab; \xb), z) = \ind \{z = \sgn(y(\thetab; \xb))\}, & \ \  h_2(y(\thetab; \xb), z) = -\ind \{z = \sgn(y(\thetab; \xb))\}] \\
    h_3(y(\thetab; \xb), z) = \ind \{z = \sgn(y(\thetab; \xb))\}, & \ \  h_4(y(\thetab; \xb), z) = -\ind \{z = \sgn(y(\thetab; \xb))\}] \\
    g_1(\xi_1, \xi_2, \xi_3, \xi_4) = \xi_1 + \xi_4, & \ \  g_2(\xi_1, \xi_2, \xi_3, \xi_4) = \xi_2 + \xi_3,
\end{split}
\end{equation}
and the function $h_0$ is the hinge loss $h_0(y(\thetab; \xb), z) = \max (0, 1 - z y(\thetab; \xb))$. Since the hinge loss is the convex upper bound of the zero-one loss, we can set $\tilde{h}_1$ and $\tilde{h}_3$ as the hinge loss. Similarly, we can let $\tilde{h}_2(y(\thetab; \xb), z) = \max (-1, z y(\thetab; \xb)) =  \tilde{h}_4(y(\thetab; \xb), z)$, which shows the existence of functions $\tilde{h}_k$. Note that $\ind$ and $\sgn$ are non-convex functions such that $0 \leq \ind \{z = \sgn(y(\thetab; \xb))\} \leq 1$ for all $(\xb, z) \in \Xcal \times \Zcal$. As a result, it is clear that other conditions in Assumption~\ref{asp:1} are satisfied since $\Ycal$ is compact and $g_j$ are linear. It is true that $\tilde{h}$ is not differentiable, but we can smooth the $\max$ function and make $\tilde{h}$ differentiable.
\end{example}

We are ready to present our theoretical results for the minimax optimization framework.
Under Assumption~\ref{asp:1}, Proposition~\ref{thm:equilibrium-optimization} establishes the relationship between an approximate coarse-correlated equilibrium of minimax optimization and the solution of \eqref{eq:optimization}. The proof is based on the ideas in \cite{narasimhan2019optimizing}. However, we improve these ideas by using an approximate equilibrium for $\xib$ instead of the best response or an oracle for optimization. As a result, we can use projected stochastic gradient descent to alternatively update $\xib, \thetab, \lambdab$ on large-scale data sets. Our technique is more suitable for neural network models since stochastic gradient descent is the workhorse used to train neural networks. This improvement is based on the following observation.
\begin{proposition} \label{prop:y-bd}
    Suppose that $\|\xb\|\leq 1$ for all $\xb \in \Xcal$. Then $|y(\thetab;\xb)|<2D$ with probability $1-B/D$ for some constant $B$.
\end{proposition}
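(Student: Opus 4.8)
The plan is to split the network output at an arbitrary $\thetab\in\Theta$ into its value at initialization plus the drift caused by moving from $\thetab^0$ to $\thetab$, bound the drift deterministically (hence uniformly over $\Theta$), and bound the initialization term in expectation. So I would start from
\[
y(\thetab;\xb)=y(\thetab^0;\xb)+\bigl(y(\thetab;\xb)-y(\thetab^0;\xb)\bigr),
\]
valid for every $\thetab\in\Theta$ and every $\xb\in\Xcal$.

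First I would control the drift. Since $\sigma$ is $1$-Lipschitz and $b_i\in\{-1,1\}$,
\[
|y(\thetab;\xb)-y(\thetab^0;\xb)|
\le\frac{1}{\sqrt m}\sum_{i=1}^m\bigl|\sigma(\ab_i^\top\xb)-\sigma((\ab_i^0)^\top\xb)\bigr|
\le\frac{1}{\sqrt m}\sum_{i=1}^m\|\ab_i-\ab_i^0\|_2\,\|\xb\|_2 ,
\]
and by Cauchy--Schwarz together with $\|\xb\|_2\le1$ the right-hand side is at most $\bigl(\sum_{i=1}^m\|\ab_i-\ab_i^0\|_2^2\bigr)^{1/2}=\|\thetab-\thetab^0\|_2\le D$. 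The point is that this estimate is purely deterministic and holds for \emph{all} $\thetab\in\Theta$ simultaneously, so no union bound over $\Theta$ is required.

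It then remains to show $|y(\thetab^0;\xb)|<D$ with probability at least $1-B/D$. Using that each $b_i$ is independent of $\{\ab_j^0\}$ with $\EE[b_i]=0$ and $b_i^2=1$, the cross terms vanish and
\[
\EE\bigl[y(\thetab^0;\xb)^2\bigr]=\frac1m\sum_{i=1}^m\EE\bigl[\sigma((\ab_i^0)^\top\xb)^2\bigr]
=\EE\bigl[\sigma((\ab_1^0)^\top\xb)^2\bigr].
\]
Since $(\ab_1^0)^\top\xb\sim\Ncal(0,\|\xb\|_2^2/d)$ and $\sigma(u)^2\le u^2$, the last quantity is at most $\|\xb\|_2^2/d\le 1/d$, a constant. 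Hence, by Jensen's and Markov's inequalities,
\[
\PP\bigl(|y(\thetab^0;\xb)|\ge D\bigr)\le\frac{\EE\,|y(\thetab^0;\xb)|}{D}\le\frac{\sqrt{\EE[y(\thetab^0;\xb)^2]}}{D}\le\frac{B}{D},
\]
with $B=1/\sqrt d$ (any absolute constant would do). On the complementary event, combining with the drift bound yields $|y(\thetab;\xb)|\le|y(\thetab^0;\xb)|+D<2D$ for every $\thetab\in\Theta$, which is the claim.

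There is no deep obstacle here; the only thing to be careful about is keeping the two sources of slack separate — the $O(1)$-in-expectation fluctuation of the initial network, for which a one-point Markov bound already suffices (this is exactly what produces the mild $1/D$ failure probability rather than something exponentially small), versus the deterministic, $\Theta$-uniform Lipschitz control of the drift. If instead one wanted the inequality to hold simultaneously over all $\xb$ in an infinite $\Xcal$, a covering or uniform-concentration argument would be needed; but for the uses in the sequel the per-input statement above is enough, and it extends to any finite sample by a union bound at the cost of replacing $B/D$ by $|\Xcal|B/D$.
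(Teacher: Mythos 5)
Your proof is correct and follows essentially the same route as the paper: decompose $y(\thetab;\xb)$ into the value at initialization plus the drift, bound the drift deterministically by $D$ via the $1$-Lipschitzness of $\sigma$ and Cauchy--Schwarz, and bound $|y(\thetab^0;\xb)|$ by Markov's inequality after a first-moment estimate. The only cosmetic difference is that you obtain $\EE|y(\thetab^0;\xb)|=O(1)$ by computing the second moment directly (using that the $b_i$ are independent mean-zero signs, so cross terms vanish) and applying Jensen, whereas the paper invokes Khintchine's inequality conditionally on $\{\ab_i^0\}$ — these amount to the same estimate.
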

\begin{proof}
    See Appendix~\ref{apd:y-bd} for a proof.
\end{proof}
Proposition~\ref{prop:y-bd} can be easily extended to an arbitrary distribution of $\{b_i\}$ using the martingale theory~\cite{chow2003probability} as long as $\{b_i\}$ are independent. We present our main result in Proposition~\ref{thm:equilibrium-optimization}.

\begin{proposition} \label{thm:equilibrium-optimization}
    Suppose Assumption~\ref{asp:1} and conditions of Proposition~\ref{prop:y-bd} hold.
    If $(\thetab^t, \xib^t , \lambdab^t)\in \Theta\times\Xi\times\Lambda$ comprises an approximate coarse-correlated equilibrium, that is, if it satisfies
    \begin{subequations} \label{eq:OL}
    \begin{align}
        \frac{1}{T} \sum_{t=1}^T \Lcal_1(\xib^t, \lambdab^t) &\leq  \min_{\xib \in \Xi} \frac{1}{T} \sum_{t=1}^T \Lcal_1(\xib,\lambdab^t) + \epsilon_\xi, \label{eq:OL-xi} \\
        \frac{1}{T} \sum_{t=1}^T \tilde{\Lcal}_2(\thetab^t, \lambdab^t) &\leq  \min_{\thetab \in \Theta} \frac{1}{T} \sum_{t=1}^T \tilde{\Lcal}_2(\thetab,\lambdab^t)  + \epsilon_\theta, \label{eq:OL-theta} \\
        \frac{1}{T} \sum_{t=1}^T {\Lcal}(\thetab^t, \xib^t, \lambdab^t) & \geq \max_{\lambdab \in \Lambda} \frac{1}{T} \sum_{t=1}^T {\Lcal}(\thetab^t,\xib^t,\lambdab) - \epsilon_\lambda, \label{eq:OL-lambda}
    \end{align}
    \end{subequations}
    where $\varepsilon_\xi, \varepsilon_\theta, \varepsilon_\lambda$ are constants that do not depend on variables $\xib, \thetab, \lambdab$, then, with probability $1-B/D$, we have
    \begin{equation} \label{eq:optimal}
        \frac{1}{T} \sum_{t=1}^T r_0(\thetab^t) - \min_{\thetab \in \Theta \cap \tilde{\Fcal} } r_0(\thetab) \leq  \epsilon_\theta + \epsilon_\xi + \epsilon_\lambda,
    \end{equation}
    and, for all $j$,
    \begin{equation} \label{eq:feasible}
        g_j \left( \frac{1}{T} \sum_{t=1}^T \rb(\thetab^t) \right) \leq (L+1)( 2C + \epsilon_\theta + \epsilon_\xi + \epsilon_\lambda) / \kappa,
    \end{equation}
    where $\tilde{\Fcal} = \{\thetab:\max_j g_j(\tilde{\rb}(\thetab))\leq 0\}$ is the feasible region and $ \kappa$ is the upper bound on $\|\lambdab\|_\infty$ in \eqref{eq:domain}.
\end{proposition}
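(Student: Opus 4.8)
The plan is to treat \eqref{eq:OL-xi}--\eqref{eq:OL-lambda} as three no-regret inequalities and feed each of them a carefully chosen \emph{fixed} comparator, bridging the surrogate Lagrangian $\tilde{\Lcal}$ (against which the $\thetab$- and $\xib$-players compete) and the true Lagrangian $\Lcal$ (against which the $\lambdab$-player competes) through the upper-bound relation $\Lcal\le\tilde{\Lcal}$ recorded in \eqref{eq:upper-bound}. All of this is carried out on the event of Proposition~\ref{prop:y-bd}, which has probability $1-B/D$ and guarantees $y(\thetab;\xb)\in\Ycal$ simultaneously for all $\thetab\in\Theta$ and $\xb\in\Xcal$; this is what puts the boundedness constants of Assumption~\ref{asp:1} in force for every iterate $\thetab^t$ and for the benchmark $\thetab^\star$ below.

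For the optimality bound \eqref{eq:optimal}, I would let $\thetab^\star$ attain $\min_{\thetab\in\Theta\cap\tilde{\Fcal}}r_0(\thetab)$ and set $\xib^\star=\tilde{\rb}(\thetab^\star)$, which lies in $\Xi$ on the good event. Plugging the comparator $\thetab^\star$ into \eqref{eq:OL-theta} and $\xib^\star$ into \eqref{eq:OL-xi} and adding gives $\frac{1}{T}\sum_t\tilde{\Lcal}(\thetab^t,\xib^t,\lambdab^t)\le\frac{1}{T}\sum_t\tilde{\Lcal}(\thetab^\star,\xib^\star,\lambdab^t)+\epsilon_\theta+\epsilon_\xi$, since $\tilde{\Lcal}=\Lcal_1+\tilde{\Lcal}_2$. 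The choice $\xib^\star=\tilde{\rb}(\thetab^\star)$ makes the auxiliary-variable terms $\lambda^t_{J+k}(\tilde{r}_k(\thetab^\star)-\xi^\star_k)$ vanish, and $\thetab^\star\in\tilde{\Fcal}$ together with $\lambdab^t\ge\zero$ makes $\sum_j\lambda^t_j g_j(\xib^\star)\le 0$, so the right-hand side is at most $r_0(\thetab^\star)$. Using $\Lcal\le\tilde{\Lcal}$ and then \eqref{eq:OL-lambda} at the comparator $\lambdab=\zero$ (for which $\Lcal(\thetab^t,\xib^t,\zero)=r_0(\thetab^t)$) turns this into $\frac{1}{T}\sum_t r_0(\thetab^t)\le r_0(\thetab^\star)+\epsilon_\theta+\epsilon_\xi+\epsilon_\lambda$, which is \eqref{eq:optimal}.

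For the feasibility bound \eqref{eq:feasible}, the intermediate inequality $\frac{1}{T}\sum_t\Lcal(\thetab^t,\xib^t,\lambdab^t)\le r_0(\thetab^\star)+\epsilon_\theta+\epsilon_\xi$ from the previous step, combined again with \eqref{eq:OL-lambda}, yields $\frac{1}{T}\sum_t\Lcal(\thetab^t,\xib^t,\lambdab)\le r_0(\thetab^\star)+\epsilon_\theta+\epsilon_\xi+\epsilon_\lambda$ for \emph{every} $\lambdab\in\Lambda$. Expanding $\Lcal$ and bounding $r_0(\thetab^\star)-\frac{1}{T}\sum_t r_0(\thetab^t)$ by $2C$ on the good event, this reads $\sum_j\lambda_j\big(\frac{1}{T}\sum_t g_j(\xib^t)\big)+\sum_k\lambda_{J+k}(\bar{r}_k-\bar{\xi}_k)\le 2C+\epsilon_\theta+\epsilon_\xi+\epsilon_\lambda$, with $\bar{r}_k=\frac{1}{T}\sum_t r_k(\thetab^t)$ and $\bar{\xi}_k=\frac{1}{T}\sum_t\xi^t_k$. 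Testing against the multiplier that has a single coordinate equal to $\kappa$ (once among the first $J$ coordinates, once among the last $K$) gives $\frac{1}{T}\sum_t g_j(\xib^t)\le(2C+\epsilon_\theta+\epsilon_\xi+\epsilon_\lambda)/\kappa$ and $\bar{r}_k-\bar{\xi}_k\le(2C+\epsilon_\theta+\epsilon_\xi+\epsilon_\lambda)/\kappa$ for all $j,k$. Finally I would chain, in this order, monotonicity of $g_j$ (to pass from $\bar{\rb}$ to $\bar{\xib}+\frac{2C+\epsilon_\theta+\epsilon_\xi+\epsilon_\lambda}{\kappa}\mathbf{1}$), its $L$-Lipschitzness in $\|\cdot\|_\infty$, and Jensen's inequality $g_j(\bar{\xib})\le\frac{1}{T}\sum_t g_j(\xib^t)$, obtaining $g_j(\bar{\rb})\le(L+1)(2C+\epsilon_\theta+\epsilon_\xi+\epsilon_\lambda)/\kappa$.

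The conceptual crux — the only place this departs from a routine minimax-duality argument — is the asymmetry between $\tilde{\Lcal}$ and $\Lcal$: the $\thetab$-updates never see the true losses $r_k$, so the natural benchmark set is the surrogate-feasible $\tilde{\Fcal}$, yet feasibility must still be asserted for the true $r_k$; reconciling the two is exactly what \eqref{eq:upper-bound} buys, and it is used twice, once to upper-bound $\tilde{\Lcal}(\thetab^\star,\xib^\star,\lambdab^t)$ by $r_0(\thetab^\star)$ and once to lower-bound $\Lcal(\thetab^t,\xib^t,\lambdab^t)$ by $r_0(\thetab^t)$. The remaining effort is bookkeeping: verifying that $\thetab^\star,\xib^\star$ and all iterates live in the compact domains where Assumption~\ref{asp:1} applies (this is where the $1-B/D$ event of Proposition~\ref{prop:y-bd} enters) and that each test multiplier with a single coordinate $\kappa$ belongs to $\Lambda$.
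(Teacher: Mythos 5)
Your proposal is correct and follows essentially the same route as the paper's proof: weak duality via the three no-regret inequalities combined through $\Lcal \le \tilde{\Lcal}$, then single-coordinate multipliers of size $\kappa$ together with Jensen's inequality, monotonicity, and Lipschitzness of $g_j$ for feasibility. The only cosmetic differences are that you instantiate explicit comparators $\thetab^\star$, $\xib^\star=\tilde{\rb}(\thetab^\star)$ where the paper argues via $\min_{\thetab,\xib}\tilde{\Lcal}(\thetab,\xib,\bar{\lambdab})$, and you derive the coordinate bounds for all $j,k$ rather than just at the argmax.
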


\begin{proof}[Proof of Proposition~\ref{thm:equilibrium-optimization}]
    See Appendix~\ref{apd:equilibrium-optimization} for a proof.
\end{proof}

Proposition~\ref{thm:equilibrium-optimization} states that if there exists an approximate coarse-correlated equilibrium~\eqref{eq:OL} and if we allow for a mixed strategy or a stochastic model, we get a nearly optimal and nearly feasible solution to \eqref{eq:optimization}. In particular, if we uniformly pick an index $i \sim \operatorname{Uniform([T])}$ and use $y(\thetab^i;\xb)$ to make a prediction, then the training loss of the corresponding random classifier is $\EE_i h_0(\thetab^i) = (1/T) \sum_{t=1}^T h_0(\thetab^t)$ that is nearly optimal with the error $\epsilon_\theta + \epsilon_\xi + \epsilon_\lambda$. A similar argument holds for the feasibility. Such a stochastic model is necessary since the pure equilibrium, that is, $\thetab$ satisfying \eqref{eq:optimal} and \eqref{eq:feasible}, may not exist in general due to the non-convexity of the original problem \eqref{eq:optimization}.

We end this section with a few remarks on Proposition~\ref{thm:equilibrium-optimization}:

(a) For the optimal result in \eqref{eq:optimal}, the minimization is performed on the space $\tilde{\Fcal} = \{\thetab:\max_j g_j(\tilde{\rb}(\thetab))\leq 0\}$ instead of the space $\Fcal = \{\thetab:\max_j g_j(\rb(\thetab))\leq 0\}$. This change is caused by replacing $h_k$ with its surrogate $\tilde{h}_k$. However, note that the condition $\rb \leq \tilde{\rb}$ and the monotonicity of $g_j$ imply $\Fcal \subset \tilde{\Fcal}$ and
\begin{equation}
    \min_{\thetab \in \Theta \cap \tilde{\Fcal} } r_0(\thetab) \leq \min_{\thetab \in \Theta \cap \Fcal } r_0(\thetab).
\end{equation}

(b) The main disadvantage of Proposition~\ref{thm:equilibrium-optimization} is that the feasibility error cannot be zero unless $\kappa \rightarrow \infty$, which is a typical problem of penalized methods when used to solve constrained optimization problems \cite{bertsekas2014constrained}. Sequential quadratic programming~\cite{nocedal2006numerical} is a potential approach to eliminate the problem of infinite $\kappa$ and adaptively choose parameters.

(c) Proposition~\ref{thm:equilibrium-optimization} only discloses the possible approach to solve \eqref{eq:optimization}, but we have not discussed how to obtain an approximate coarse-correlated equilibrium. In what follows, we discuss the procedure to obtain an approximate coarse-correlated equilibrium.

\subsection{Finding Equilibrium via Online Learning} \label{subsec:finding-equilibrium-via-online-learning}

Proposition~\ref{thm:equilibrium-optimization} concludes that the coarse-correlated equilibrium with the random classifier in Algorithm~\ref{alg:Stochastic-classifier} achieves a nearly optimal and nearly feasible solution. However, it does not explain how to approach such a coarse-correlated equilibrium. In this section, we will show how online learning relates to the approximate coarse-correlated equilibrium and give details of the projected gradient descent as a concrete online learning procedure to solve online learning problems.

First, we briefly present convex online optimization problems~\cite{shalev2012online}, which are the foundation of our analysis. Given a sequence of convex functions 
\[
f_1, f_2, \dots, f_T: \Theta \rightarrow \RR,
\]
in each round $t$, the task of a learner is to choose a point $\thetab^t$ based on the information up to time $t$, and then the learner incurs a loss $f_t(\thetab^t)$. The goal of online learning is to control the learner's average regret in hindsight:
\begin{equation}\label{eq:average-regret}
    \frac{1}{T} \sum_{t=1}^T f_t(\thetab^t) - \min_{\thetab \in \Theta} \frac{1}{T} \sum_{t=1}^T f_t(\thetab).
\end{equation}
We can see that the definition of regret is equal to the definition of equilibrium in~\eqref{eq:OL}. As a result, bounding the average regret in hindsight immediately implies the approximate equilibrium.

The naive way to minimize regret is the best response strategy, that is, 
\[
\thetab^t = \arg \min_{\thetab \in \Theta} f_t(\thetab).
\]
This simple strategy leads to trivially negative average regret but may be expensive in practice. Other popular online learning algorithms include the Follow the Leader~\cite{huang2017following} and online mirror descent~\cite{srebro2011universality, shalev2012online}. We focus on the mirror descent in this work and state its update:
\begin{align*}
    \zetab_{t+1} = \nabla h^{\ast} (\nabla h(\thetab_t) - \eta \nabla f_t(\thetab^t)), \ \ \
    \thetab_{t+1} = \arg \min_{\thetab \in \Theta} B_h(\thetab, \zetab_{t+1}),
\end{align*}
where $h$ is nonnegative, differentiable, 1-strongly convex with respect to $\| \cdot \|$ with dual norm $\|\cdot\|_\ast$,
$h^{\ast}$ is the convex conjugate of $h$,
$B_h$ is the Bregman divergence with respect to $h$, defined as
\[
B_h(\thetab,\thetab^\prime) = h(\thetab^\prime) - h(\thetab) - \nabla h(\thetab^\prime)^\top(\thetab-\thetab^\prime),
\]
and $\eta$ is the stepsize. See \cite{bauschke1997legendre} for more details about the convex conjugate and the Bregman divergence. The following well-known lemma shows that online mirror descent provides a no-regret guarantee for the learner's average regret in hindsight in \eqref{eq:average-regret}.
\begin{lemma}[Lemma 2 in \cite{srebro2011universality}]
    \label{lemma:regret-bound}
    Suppose that $\Theta$ is convex,
    $\sup_{t} \|\nabla f_t(\thetab^t)\|_\ast < W$,
    and $\sup_{\thetab \in \Theta} h(\thetab) < M$. Then
    \[
    \frac{1}{T} \sum_{t=1}^T f_t(\thetab^t) - \frac{1}{T} \sum_{t=1}^T f_t(\thetab) \leq \frac{3}{2} \sqrt{\frac{W M}{T}},
    \]
    for $\eta = \sqrt{\frac{M}{WT}}$ and any $\thetab \in \Theta$.
\end{lemma}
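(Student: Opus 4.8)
The plan is to reproduce the classical regret analysis of online mirror descent, adapted to the Bregman-divergence convention used above. First I would use convexity of each $f_t$ to linearize the regret: for any $\thetab \in \Theta$,
\[
f_t(\thetab^t) - f_t(\thetab) \leq \nabla f_t(\thetab^t)^\top(\thetab^t - \thetab),
\]
so it suffices to bound $\sum_{t=1}^T \nabla f_t(\thetab^t)^\top(\thetab^t - \thetab)$.

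The core step is the one-step inequality
\[
\eta\, \nabla f_t(\thetab^t)^\top(\thetab^t - \thetab) \leq B_h(\thetab, \thetab^t) - B_h(\thetab, \thetab^{t+1}) + \frac{\eta^2}{2}\|\nabla f_t(\thetab^t)\|_\ast^2 .
\]
To establish it I would (i) use conjugate duality (Fenchel--Young) to rewrite the dual step $\zetab_{t+1} = \nabla h^{\ast}(\nabla h(\thetab^t) - \eta \nabla f_t(\thetab^t))$ in terms of Bregman divergences via the standard identity relating $B_h$ and $B_{h^\ast}$; (ii) use that $h$ is $1$-strongly convex with respect to $\|\cdot\|$, hence $h^{\ast}$ has $1$-Lipschitz gradient with respect to $\|\cdot\|_\ast$, which is exactly what produces the $\frac{\eta^2}{2}\|\nabla f_t(\thetab^t)\|_\ast^2$ term; and (iii) show that the projection $\thetab^{t+1} = \arg\min_{\thetab\in\Theta} B_h(\thetab, \zetab_{t+1})$ only decreases the relevant Bregman divergence toward any $\thetab \in \Theta$, a generalized Pythagorean inequality that relies on convexity of $\Theta$. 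Combining (i)--(iii) gives the displayed one-step bound.

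Next I would sum the one-step inequality over $t = 1, \dots, T$; the Bregman terms telescope, leaving
\[
\eta \sum_{t=1}^T \nabla f_t(\thetab^t)^\top(\thetab^t - \thetab) \leq B_h(\thetab, \thetab^1) - B_h(\thetab, \thetab^{T+1}) + \frac{\eta^2}{2} \sum_{t=1}^T \|\nabla f_t(\thetab^t)\|_\ast^2 .
\]
Taking the initialization $\thetab^1 = \arg\min_{\thetab \in \Theta} h(\thetab)$ and using $h \geq 0$ gives $B_h(\thetab, \thetab^1) \leq h(\thetab) \leq M$; nonnegativity of $B_h$ discards $-B_h(\thetab,\thetab^{T+1})$; and the gradient bound gives $\sum_{t=1}^T \|\nabla f_t(\thetab^t)\|_\ast^2 \leq W T$. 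Dividing by $\eta T$ yields
\[
\frac{1}{T}\sum_{t=1}^T \big(f_t(\thetab^t) - f_t(\thetab)\big) \leq \frac{M}{\eta T} + \frac{\eta W}{2},
\]
and substituting $\eta = \sqrt{M/(WT)}$ balances the two terms and produces the claimed $\frac{3}{2}\sqrt{WM/T}$.

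The main obstacle is the one-step inequality, steps (i)--(iii): carefully deriving it under this paper's (slightly nonstandard) ordering of the arguments of $B_h$, in particular verifying the generalized Pythagorean inequality for the projection onto $\Theta$ and correctly tracking the conjugate-duality identities that convert the mirror step into a telescoping Bregman difference. Everything after that is bookkeeping and the choice of stepsize.
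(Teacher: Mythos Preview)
The paper does not give its own proof of this lemma; it simply cites it as Lemma~2 of \cite{srebro2011universality} and then invokes the intermediate bound $\frac{\eta W}{2}+\frac{M}{T\eta}$ directly in the proof of Proposition~\ref{prop:noise-regret-bound}. Your proposal is exactly the standard online mirror descent argument that underlies that citation, and the outline (linearize by convexity, one-step Bregman inequality via strong convexity of $h$/smoothness of $h^\ast$, generalized Pythagorean inequality for the Bregman projection, telescope, optimize $\eta$) is correct and complete.

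One small consistency issue worth flagging: under the hypothesis $\sup_t\|\nabla f_t(\thetab^t)\|_\ast<W$ the one-step term is $\frac{\eta^2}{2}\|\nabla f_t(\thetab^t)\|_\ast^2\leq \frac{\eta^2}{2}W^2$, not $\frac{\eta^2}{2}W$, so your line ``$\sum_{t=1}^T\|\nabla f_t(\thetab^t)\|_\ast^2\leq WT$'' and the resulting $\frac{\eta W}{2}$ only match the stated lemma if $W$ is interpreted as a bound on the \emph{squared} dual norm. The paper itself uses the same convention in the proof of Proposition~\ref{prop:noise-regret-bound} (writing $\frac{\eta W}{2}+\frac{M}{T\eta}$ with $W$ from $\sup_t\|\mub^t\|_\ast<W$), so this is a notational quirk inherited from the lemma's statement rather than a flaw in your argument; just be aware of it if you work out the constants carefully.
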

Unfortunately,
Lemma~\ref{lemma:regret-bound} cannot be directly applied to our problem due to the non-convexity of neural networks. To analyze this more involved setting, we extend Lemma~\ref{lemma:regret-bound} to the online mirror descent with stochastic biased gradients in the next proposition. Such analysis for biased gradients is also an important technique for proving convergence of model-agnostic meta-learning and federated learning algorithms~\cite{denevi2019learning, t2020personalized, chen2021theorem}. In particular, instead of accessing true gradients $\nabla f_t(\thetab^t)$, the update rule of the stochastic mirror descent with bias is
\begin{equation} \label{eq:md}
    \begin{split}
        \zetab_{t+1} = \nabla h^{\ast} (\nabla h(\thetab^t) - \eta  \mub^t ), \ \ \
        \thetab_{t+1} = \arg \min_{\theta \in \Theta} B_h(\thetab, \zetab_{t+1}),
    \end{split}
\end{equation}
where $\mub^t$ is a biased estimate of the gradient with the bias term $\betab_t(\thetab^t)$, that is,
\[
\EE [ \mub^t \mid \thetab^t ] = \nabla f_t(\thetab^t)+\betab_t(\thetab^t).
\]

\begin{proposition}
    \label{prop:noise-regret-bound}
    % Suppose $\Theta$ is convex and $f_1, f_2, \dots: \Theta \rightarrow \RR$ are convex.
    Suppose that $\Theta$ is convex,
    \begin{equation}\label{eq:boundedness}
        \sup_t \|\mub^t \|_\ast < W, \ \ \ \sup_{\thetab \in \Theta} h(\thetab) < M.
    \end{equation}
    Given iterate $\thetab^t$ updated by stochastic mirror descent in \eqref{eq:md} with the bias term $\betab_t(\thetab^t)$ and $\eta = \sqrt{\frac{M}{TW}}$, we have, with probability $1-\delta$,
    \begin{multline} \label{eq:regret}
        \frac{1}{T} \sum_{t=1}^T f_t(\thetab^t) - \frac{1}{T} \sum_{t=1}^T f_t(\thetab)
        \\
        \leq \frac{3}{2} \sqrt{\frac{M W}{T}} + 8 W \sqrt{\frac{M \ln (1/\delta)}{T}}  + \frac{2\sqrt{2M}}{T} \sum_{t=1}^T \| \betab_t(\thetab^t) \|_\ast,
    \end{multline}
    for any $\thetab \in \Theta$.
\end{proposition}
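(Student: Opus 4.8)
The plan is to run the classical online-mirror-descent regret analysis with the biased stochastic surrogate $\mub^t$ in place of the true gradient $\nabla f_t(\thetab^t)$, and then to pay for this substitution through two extra terms: a deterministic one coming from the bias $\betab_t(\thetab^t)$ and a stochastic one coming from the mean-zero noise $\mub^t-\EE[\mub^t\mid\thetab^t]$. The starting point is convexity of each $f_t$ together with the identity $\nabla f_t(\thetab^t)=\EE[\mub^t\mid\thetab^t]-\betab_t(\thetab^t)$, which for any fixed comparator $\thetab\in\Theta$ gives
\begin{multline*}
f_t(\thetab^t) - f_t(\thetab) \leq \langle \nabla f_t(\thetab^t), \thetab^t - \thetab\rangle \\ = \langle \mub^t, \thetab^t - \thetab\rangle - \langle \betab_t(\thetab^t),\thetab^t-\thetab\rangle - \langle \mub^t - \EE[\mub^t\mid\thetab^t], \thetab^t - \thetab\rangle .
\end{multline*}
Summing over $t=1,\dots,T$ and dividing by $T$ splits the regret into three sums, which I would bound separately.

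For the first sum, the key observation is that the update~\eqref{eq:md} is exactly online mirror descent applied to the (adaptively chosen) linear losses $\tilde f_t(\thetab)=\langle\mub^t,\thetab\rangle$, which are convex with $\|\nabla\tilde f_t(\thetab^t)\|_\ast=\|\mub^t\|_\ast<W$; hence Lemma~\ref{lemma:regret-bound} applies as a black box and yields $\frac1T\sum_{t=1}^T\langle\mub^t,\thetab^t-\thetab\rangle\leq\frac32\sqrt{MW/T}$ for $\eta=\sqrt{M/(TW)}$. For the bias sum I need a uniform bound on $\|\thetab^t-\thetab\|$: since $h$ is nonnegative, $1$-strongly convex, and $\sup_{\thetab\in\Theta}h(\thetab)<M$, evaluating strong convexity at the midpoint of any $\thetab,\thetab'\in\Theta$ gives $\tfrac18\|\thetab-\thetab'\|^2\leq\tfrac12 h(\thetab)+\tfrac12 h(\thetab')-h(\tfrac{\thetab+\thetab'}{2})<M$, so $\Theta$ has $\|\cdot\|$-diameter at most $2\sqrt{2M}$. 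Therefore $-\langle\betab_t(\thetab^t),\thetab^t-\thetab\rangle\leq 2\sqrt{2M}\,\|\betab_t(\thetab^t)\|_\ast$, which after summing and dividing by $T$ is the last term of~\eqref{eq:regret}.

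The third sum, $S_T=-\sum_{t=1}^T\langle\mub^t-\EE[\mub^t\mid\thetab^t],\thetab^t-\thetab\rangle$, is the crux. Letting $\mathcal{F}_t$ be the $\sigma$-algebra of everything revealed before $\mub^t$ is drawn, $\thetab^t$ is $\mathcal{F}_t$-measurable and $\thetab$ is non-adaptive, so each summand has conditional mean zero and $S_T$ is a martingale. Its increments are bounded, since $\|\mub^t-\EE[\mub^t\mid\thetab^t]\|_\ast\leq\|\mub^t\|_\ast+\EE[\|\mub^t\|_\ast\mid\thetab^t]<2W$ by Jensen's inequality and, combined with the diameter bound, each increment is at most $4\sqrt{2}\,W\sqrt{M}$ in magnitude; Azuma--Hoeffding then gives $\PP\bigl(S_T\geq 4\sqrt{2}\,W\sqrt{M}\cdot\sqrt{2T\ln(1/\delta)}\bigr)\leq\delta$, i.e.\ with probability $1-\delta$, $\frac1T S_T\leq 8W\sqrt{M\ln(1/\delta)/T}$, which is the middle term of~\eqref{eq:regret}. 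Adding the three bounds finishes the proof. I expect this last step to be the main obstacle: Lemma~\ref{lemma:regret-bound} is invoked as a black box and the diameter estimate is elementary, but making the martingale argument precise requires care in setting up the filtration so that $\thetab^t$ is predictable while the comparator $\thetab$ is not, and in tracking the exact bounded-difference constant through Jensen's inequality and the diameter bound --- this is precisely the ``biased stochastic gradient'' ingredient that goes beyond the classical no-regret result.
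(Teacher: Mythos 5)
Your proof is correct and follows essentially the same route as the paper's: both reduce the regret to the classical OMD bound (Lemma~\ref{lemma:regret-bound}) applied to the linear surrogate losses with gradient $\mub^t$, bound the comparator distance by $2\sqrt{2M}$ via strong convexity of $h$, handle the bias term with H\"older's inequality, and control the mean-zero fluctuation by Azuma--Hoeffding with the same increment constant $4W\sqrt{2M}$, yielding identical constants in all three terms. The only cosmetic differences are that you work directly with the inner-product decomposition of $f_t(\thetab^t)-f_t(\thetab)$ rather than introducing the auxiliary functions $\hat f_t$, and you obtain the diameter bound from the midpoint form of strong convexity rather than from the minimizer of $h$.
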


\begin{proof}
    See Appendix~\ref{apd:noise-regret-bound} for a proof.
\end{proof}

To see why the extension of biased gradients is helpful, we describe a high-level idea here. Suppose that instead of observing convex losses at the time $t$, the learning task target a non-convex loss function $\bar{f}_t$. Let $\thetab^t$ be iterates of the mirror descent applied to minimize regret regarding $\bar{f}_t$. If we can find a convex approximation of $\bar{f}_t$, say $f_t$. Then, we could rewrite the regret as
\begin{multline} \label{eq:regret-example}
    \frac{1}{T} \sum_{t=1}^T \bar{f}_t(\thetab^t) - \frac{1}{T} \sum_{t=1}^T \bar{f}_t(\thetab) \\
    = \underbrace{\frac{1}{T} \sum_{t=1}^T (\bar{f}_t(\thetab^t) - f_t(\thetab^t))}_{(I)} + \underbrace{\frac{1}{T} \sum_{t=1}^T (f_t(\thetab^t) - f_t(\thetab))}_{(II)} + \underbrace{\frac{1}{T} \sum_{t=1}^T (f_t(\thetab) - \bar{f}_t(\thetab))}_{(III)},
\end{multline}
for any $\thetab$. 
The first $(I)$ and third term $(III)$ can be bounded by the approximation error between $f_t$ and $\bar{f}_t$. The convexity of $f_t$ yields the regret bound for the second term $(II)$ via Lemma~\ref{lemma:regret-bound}. However, this approach induces a bias $\betab_t(\thetab) = \nabla \bar{f}_t-\nabla f_t$ because $\thetab^t$ is updated by $\nabla \bar{f}_t$ rather than $\nabla f_t$. The key insight is that the bias $\betab_t(\thetab)$ can also be controlled, and the no-regret analysis is completed by the fact that \eqref{eq:regret-example} holds for any $\thetab \in \Theta$, including the optimum parameter in hindsight. We will apply this analysis strategy to our setting in which $\bar{f}_t$, $f_t$ are objective functions induced by a neural network and its linearization, respectively, in Section~\ref{subsec:OL-NN}. A critical tool to control the approximation of $\bar{f}_t-f_t$ and $\nabla \bar{f}_t-\nabla f_t$ is the theory of neural tangent kernel that we discuss in the following section.

\subsection{Local Linearization for Neural Networks} \label{subsec:NTK}

This section presents the phenomenon of local linearization for neural networks that requires the following regularity condition on the data distribution.
\begin{assumption}[Regularity of data distribution] \label{assumption:Regularity-Data}
	Two condition are imposed to derive local linearization for neural networks:
\begin{enumerate}
	\item For all $\xb \in \Xcal$, it holds that $\|\xb\|_2\leq 1$;
	\item For any unit vector $\eb$ and a constant $\gamma>0$, there exists $c>0$, such that $\PP_{\xb} (|\eb^\top \xb| \leq \gamma) \leq c \gamma.$
\end{enumerate}
\end{assumption}
The second regularity condition on an observation $\xb$ holds as long as the distribution of data has an upper-bounded density. Under the regularity conditions above, we can characterize that the expected approximation error of the local linearization at $\thetab^0$ vanishes to zero as $m\rightarrow \infty$. Formally speaking, define a local linearization of \eqref{eq:NN}, 
$y^0(\thetab;\xb)$, at the random initialization $\thetab^0$:
\begin{equation*}
	y^0(\thetab;\xb) 
	= \frac{1}{\sqrt{m}} \sum_{i=1}^m b_i \ind\{ (\ab_i^0)^\top \xb >0 \} \ab_i^\top \xb 
	:= \fb^0(\xb)^\top \theta,
\end{equation*}
where $\fb^0(\xb)$ is a feature map of $\xb$ such that
\[
\fb^0(\xb) = \frac{(b_1 \ind\{(\ab_1^0)^\top \xb>0\}\xb^\top, \dots, b_m \ind\{(\ab_m^0)^\top \xb>0\}\xb^\top)^\top}{\sqrt{m}}.
\]
Noting that $b_i$ and $\fb^0(\xb)$ are fixed after the initialization, it is not hard to see that $y^0$ is linear with respect to $\thetab$. The key observation is that as the width grows, neural networks exhibit similar behavior to the linear model with random features $\fb^0(x)$, which is established by the following proposition.  

\begin{proposition} \label{prop:linearization}
	Suppose that Assumption~\ref{assumption:Regularity-Data} holds.
	Then we have that for $\thetab \in \Theta = \{\thetab : \|\thetab-\thetab^0\|_2 \leq D\}$:
	\begin{enumerate}
		\item $\| \nabla y(\thetab;\xb) \|_2 \leq 1$ for $\xb \in \Xcal$;
		\item $ \EE_{\xb, \thetab^0} | y(\thetab;\xb) - y^0(\thetab;\xb) |^2 = O(D^3 m^{-1/2})$;
		\item $ \EE_{\xb, \thetab^0} \| \nabla y(\thetab;\xb) - \nabla y^0(\thetab;\xb) \|_2 = O(D m^{-1/2})$.
	\end{enumerate}
\end{proposition}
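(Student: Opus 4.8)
The plan is to reduce all three statements to a bound on the number of ReLU gates that change state between the initialization $\thetab^0$ and a given $\thetab\in\Theta$. Part 1 is immediate: $\nabla_{\ab_i}y(\thetab;\xb)=m^{-1/2}b_i\ind\{\ab_i^\top\xb>0\}\xb$, and since $b_i^2=1$ and the ReLU derivative lies in $\{0,1\}$, $\|\nabla y(\thetab;\xb)\|_2^2=m^{-1}\|\xb\|_2^2\sum_{i=1}^m\ind\{\ab_i^\top\xb>0\}\le\|\xb\|_2^2\le1$ by Assumption~\ref{assumption:Regularity-Data}(1).

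For parts 2 and 3 I would fix $\xb$ and $\thetab\in\Theta$ and, for each $i\in[m]$, introduce the gate-flip event $E_i=\{\ind\{\ab_i^\top\xb>0\}\ne\ind\{(\ab_i^0)^\top\xb>0\}\}$ and $N=\sum_{i=1}^m\ind\{E_i\}$. The elementary observation is that on $E_i$ the pre-activations $\ab_i^\top\xb$ and $(\ab_i^0)^\top\xb$ have opposite signs, so both are at most $|(\ab_i-\ab_i^0)^\top\xb|\le\|\ab_i-\ab_i^0\|_2$ in absolute value (using $\|\xb\|_2\le1$); in particular $E_i\subseteq\{|(\ab_i^0)^\top\xb|\le\|\ab_i-\ab_i^0\|_2\}$. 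Since $\sigma(u)=\ind\{u>0\}u$, the error is $y(\thetab;\xb)-y^0(\thetab;\xb)=m^{-1/2}\sum_i b_i(\ind\{\ab_i^\top\xb>0\}-\ind\{(\ab_i^0)^\top\xb>0\})\ab_i^\top\xb$, each summand vanishing off $E_i$ and bounded by $\|\ab_i-\ab_i^0\|_2$ on $E_i$; so by Cauchy--Schwarz and $\sum_i\|\ab_i-\ab_i^0\|_2^2=\|\thetab-\thetab^0\|_2^2\le D^2$,
\[
|y(\thetab;\xb)-y^0(\thetab;\xb)|^2\le\frac1m\Big(\sum_i\ind\{E_i\}\Big)\Big(\sum_i\|\ab_i-\ab_i^0\|_2^2\Big)\le\frac{D^2N}{m}.
\]
Likewise the $\ab_i$-block of $\nabla y-\nabla y^0$ is $m^{-1/2}b_i(\ind\{\ab_i^\top\xb>0\}-\ind\{(\ab_i^0)^\top\xb>0\})\xb$, of norm $m^{-1/2}\ind\{E_i\}\|\xb\|_2$, so $\|\nabla y(\thetab;\xb)-\nabla y^0(\thetab;\xb)\|_2^2=m^{-1}\|\xb\|_2^2N\le N/m$. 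Both parts then reduce to bounding $\EE_{\xb,\thetab^0}N$.

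The main obstacle is precisely this bound on $\EE_{\xb,\thetab^0}N$, and it is where Assumption~\ref{assumption:Regularity-Data}(2) enters essentially. I would condition on $\thetab^0$, write $\eb_i=\ab_i^0/\|\ab_i^0\|_2$ and $\rho_i=\|\ab_i-\ab_i^0\|_2$, note $E_i\subseteq\{|\eb_i^\top\xb|\le\rho_i/\|\ab_i^0\|_2\}$, and apply the anti-concentration assumption to get $\PP(E_i\mid\thetab^0)\le c\,\rho_i/\|\ab_i^0\|_2$ — the point being that $\xb$ cannot pile up near a neuron's decision hyperplane, so few gates can flip. Summing over $i$ and using Cauchy--Schwarz with the \emph{surely} valid budget $\sum_i\rho_i^2\le D^2$ gives
\[
\EE_{\xb,\thetab^0}N\le c\,\EE_{\thetab^0}\sum_{i=1}^m\frac{\rho_i}{\|\ab_i^0\|_2}\le c\,\EE_{\thetab^0}\Big[\Big(\sum_i\rho_i^2\Big)^{1/2}\Big(\sum_i\|\ab_i^0\|_2^{-2}\Big)^{1/2}\Big]\le cD\Big(\EE_{\thetab^0}\sum_i\|\ab_i^0\|_2^{-2}\Big)^{1/2}.
\]
Since $\|\ab_i^0\|_2^2\sim d^{-1}\chi^2_d$, $\EE\|\ab_i^0\|_2^{-2}=d/(d-2)=O(1)$ for $d\ge3$, so $\EE_{\xb,\thetab^0}N=O(D\sqrt m)$; plugging back gives $\EE_{\xb,\thetab^0}|y-y^0|^2\le D^2\,\EE N/m=O(D^3m^{-1/2})$ and $\EE_{\xb,\thetab^0}\|\nabla y-\nabla y^0\|_2^2\le\EE N/m=O(Dm^{-1/2})$, which are the stated bounds (the third after a square root and Jensen's inequality if the un-squared norm is intended).

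Two places need care when filling in the details. First, $\rho_i$ is correlated with $\ab_i^0$ since $\thetab$ only has to lie in the data-dependent ball $\Theta$ rather than being a fixed point; this is harmless because $\sum_i\rho_i^2\le D^2$ holds deterministically, which is what lets the expectation pass through the Cauchy--Schwarz step above. Second, turning $\PP(E_i\mid\thetab^0)\le1$ into the anti-concentration bound and ensuring $\EE[\|\ab_i^0\|_2^{-2}]$ really is $O(1)$ both require a short truncation onto the high-probability event $\{\|\ab_i^0\|_2\ge\tfrac12\}$ via standard $\chi^2$ concentration, with the contribution of the exceptional event absorbed crudely using only $N\le m$.
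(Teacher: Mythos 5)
Your proof follows essentially the same route as the paper's: reduce everything to the gate-flip count $N=\sum_i\ind\{E_i\}$, note $E_i\subseteq\{|(\ab_i^0)^\top\xb|\le\|\ab_i-\ab_i^0\|_2\}$, invoke the anti-concentration condition to get $\PP(E_i\mid\thetab^0)\le c\,\|\ab_i-\ab_i^0\|_2/\|\ab_i^0\|_2$, and finish with Cauchy--Schwarz against the deterministic budget $\sum_i\|\ab_i-\ab_i^0\|_2^2\le D^2$ and the inverse moment $\EE\|\ab_i^0\|_2^{-2}=d/(d-2)$. Parts 1 and 2 are correct as you give them (your constant $d/(d-2)$ is in fact the right value, and the truncation onto $\{\|\ab_i^0\|_2\ge \tfrac12\}$ you mention is unnecessary: a probability bound exceeding one is vacuously true, and the inverse second moment is already finite for $d\ge 3$).

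The one real gap is part 3. Your computation $\|\nabla y(\thetab;\xb)-\nabla y^0(\thetab;\xb)\|_2^2=N\|\xb\|_2^2/m$ is correct, but from it Jensen gives only
\[
\EE_{\xb,\thetab^0}\|\nabla y(\thetab;\xb)-\nabla y^0(\thetab;\xb)\|_2\le\bigl(\EE N/m\bigr)^{1/2}=O\bigl(D^{1/2}m^{-1/4}\bigr),
\]
which is weaker than the stated $O(Dm^{-1/2})$; "a square root and Jensen" does not recover the claimed rate. Be aware that the paper's own proof of part 3 reaches $O(Dm^{-1/2})$ only by writing $\|\nabla y-\nabla y^0\|_2$ as $\tfrac1m\bigl\|\sum_i b_i(\ind\{\ab_i^\top\xb>0\}-\ind\{(\ab_i^0)^\top\xb>0\})\xb\bigr\|_2\le N\|\xb\|_2/m$, i.e., by collapsing the block-concatenated gradient in $\RR^{md}$ into a single $d$-vector with a $1/m$ prefactor; the correct norm is $\sqrt{N/m}\,\|\xb\|_2\ge N\|\xb\|_2/m$, so that step does not hold and the stated rate is an artifact of it. The rate your argument actually delivers, $O(D^{1/2}m^{-1/4})$, is what this method supports, and it is harmless downstream: in the proof of Theorem~\ref{thm:OL-NN} the gradient-difference term is added to the $O(D^{3/2}m^{-1/4})$ function-value term, which dominates when $D\ge 1$.
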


\begin{proof}
	See Appendix~\ref{apd:linearization} for a proof.
\end{proof}

We note that the result of Proposition~\ref{prop:linearization} holds uniformly over $\thetab \in \Theta$. An immediate implication is that our main result in Section~\ref{subsection:global-convergence} can be easily extended to deep learning setting since similar local linearization also holds for multi-layer neutral networks~\cite{allen2019convergence, gao2019convergence, cai2019neural}.

\subsection{Online Learning Problems for Neural Networks}
\label{subsec:OL-NN}

Since our work focuses on the setting where the prediction model $y(\thetab;\xb)$ is a neural network, $r_0(\thetab)$ and $\tilde{r}_k$ are no longer convex even though $h_0, \tilde{h}_k$ are convex. Thus, the argument used in convex online learning fails. The key insight to tackle this difficulty is that $h_0$ in \eqref{eq:optimization} is convex and if the neural network predictor $y(\thetab;\xb)$ can be well approximated by a linear model, denoted by $y^0(\thetab;\xb)$, in the overparameterization regime, then the composition of the convex loss $h_0$ and the approximated linear model $y^0$ is convex. Therefore, the theorem of convex online learning works well for the approximation part. In this section, we provide details how to modify the previous argument for online learning problem with neural networks through the lens of the neural tangent kernel. 

Consider an online learning problem under a classification setting. Assume that we have a prediction model $y(\thetab; \xb)$ for a target $z$ such that $(\xb,z) \sim \Dcal$, where $\Dcal$ is a data distribution. Instead of a fixed loss function that measures the prediction error between $y(\thetab; \xb)$ and $z$, an adversarial environment generates a new loss function $f_t$ against our prediction model $y(\thetab;\xb)$ for each $t$. Note that we do not assume the data distribution $\Dcal$ changes over time and slightly abuse the notation in the sense that $f_t$ is defined differently in the previous section. The adversarial environment is from our setting where different parameters are updated based on their own objective functions and against each other rather than the shift of the data distribution. 

At each time $t$, an adversarial environment generates a new loss functions $f_t: \RR \times \Zcal \rightarrow \RR$ such that $f_t$ is convex with respect to $y$ for all $z$. An online learning algorithm for classifier $y(\thetab;\xb)$ aims to find a sequence of parameters $\thetab^t$ based on past information that controls the regret in hindsight defined by
\begin{equation} \label{eq:regret-2}
	\frac{1}{T} \sum_{t=1}^T \EE_{\xb,z, \thetab^0} [f_t(y(\thetab^t;\xb),z)] - \min_{\thetab \in \Theta} \frac{1}{T} \sum_{t=1}^T \EE_{\xb,z, \thetab^0} [f_t(y(\thetab;\xb),z)],
\end{equation}
with high probability. Randomness is the result of the procedure to choose $\thetab^t$, which can be deterministic or stochastic. In particular, if we implement a projected gradient descent to find $\thetab^t$, then the high-probability statement can be discarded. However, since stochastic gradient descent is a standard optimization algorithm for training neural networks in practice, we chose stochastic projected gradient descent for this online learning problem. The formal definition of stochastic projected gradient descent is given as follows:
\begin{equation} \label{eq:SPGD}
	\begin{split}
		\zetab _{t+1} = \thetab^t - \eta \nabla f_t(y(\thetab^t;\xb_{t+1}), z_{t+1}), \ \ \ \thetab^{t+1} = \arg \min_{\thetab \in \Theta} \| \thetab - \zetab_{t+1} \|_2^2, 
	\end{split}
\end{equation} 
where $\{(\xb_{t},z_{t})\}_t$ is a set of independent and identical disturbed samples from~$\Dcal$.

To control regret \eqref{eq:regret-2}, we rewrite it as 
\begin{equation} \label{eq:decomposition}
	\begin{split}
		\frac{1}{T} &\sum_{t=1}^T \EE_{x,z, \thetab^0} [f_t(y(\thetab^t;\xb),z)] - \frac{1}{T} \sum_{t=1}^T \EE_{\xb,z, \thetab^0}[f_t(y(\thetab;\xb),z)] \\
		& = \underbrace{\frac{1}{T} \sum_{t=1}^T \EE_{\xb,z, \thetab^0}[f_t(y(\thetab^t;\xb),z)] - \frac{1}{T} \sum_{t=1}^T \EE_{\xb,z, \thetab^0}[f_t(y^0(\thetab^t;\xb),z)}_{(I)}] \\
		& \ \ \ +  \underbrace{\frac{1}{T} \sum_{t=1}^T \EE_{\xb,z, \thetab^0}[f_t(y^0(\thetab^t;\xb),z)] - \frac{1}{T} \sum_{t=1}^T \EE_{\xb,z, \thetab^0}[f_t(y^0(\thetab;\xb),z)]}_{(II)} \\
		& \ \ \ + \underbrace{ \frac{1}{T} \sum_{t=1}^T \EE_{\xb,z, \thetab^0}[f_t(y^0(\thetab;\xb),z)] - \frac{1}{T} \sum_{t=1}^T \EE_{\xb,z, \thetab^0}[f_t(y(\thetab;\xb),z)]}_{(III)}, \\
	\end{split}
\end{equation}
for any $\thetab \in \Theta$.
The second term $(I)$ and $(III)$ can be bounded by controlling the linearization error in Proposition~\ref{prop:linearization}. The second term $(II)$ in \eqref{eq:decomposition} can be controlled by the classical result on convex online learning, since now $y^0(\thetab;\xb)$ is linear with respect to $\thetab$ and $f_t(y^0(\thetab^t;\xb),z)$ is convex with respect to $\thetab$. However, replacing the original stochastic gradient $\nabla f_t(y(\thetab^t;\xb_{t+1}),z_{t+1})$ by the gradient of linearization $\nabla f_t(y^0(\thetab^t;\xb_{t+1}),z_{t+1})$ introduces additional error in the update. To address this, we cast \eqref{eq:SPGD} as
\begin{equation}\label{eq:PSGD}
	\begin{split}
		\zetab_{t+1} & 
		= \thetab^t - \eta \nabla  f_t(y^0(\thetab^t;\xb_{t+1}),z_{t+1}) 
		- \betab_t(\thetab^t),\\
		\thetab^{t+1} &= \arg \min_{\theta \in \Theta} \| \thetab - \zetab_{t+1} \|_2^2,
	\end{split}
\end{equation}
where 
\[
\betab_t(\thetab^t) = 
\nabla f_t(y(\thetab^t;\xb_{t+1}), z_{t+1})  -  \nabla f_t(y^0(\thetab^t;\xb_{t+1}), z_{t+1}).
\]
It turns out that the noise $\betab_{t}$ can also be controlled by
the local linearization. In summary, we obtain the following theorem for the regret bound \eqref{eq:regret-2} stating that under regular conditions for smoothness, we can bound the regret with extra error term controlled by the approximation error of local linearization.

\begin{theorem}
	\label{thm:OL-NN}
	Suppose the conditions of Proposition~\ref{prop:linearization} hold. 
	Assume that $f_t(y,z)$ is convex such that for all $y, y_1, y_2 \in \Ycal$, $t, z$
	\begin{equation} \label{eq:thm-asp}
		|\nabla_y f_t(y,z)| \leq L_f, \ \ |\nabla_y f_t(y_1,z) - \nabla_y f_t(y_2,z)| \leq L_f |y_1-y_2|.
	\end{equation}
	Then, with probability $1-\delta$, we have
	\begin{multline*}
		\frac{1}{T} \sum_{t=1}^T \EE_{x, z, \thetab^0} [f_t(y(\thetab^t;\xb),z)] - \min_\theta\frac{1}{T}  \sum_{t=1}^T  \EE_{x, z, \thetab^0} [f_t(y(\thetab;\xb),z)] \\
		\leq O \left( L_f D \sqrt{\frac{\ln(1/\delta)}{T}} + \frac{L_f^2 D^{3/2}}{m^{1/4}} \right),
	\end{multline*}
	where $\thetab^t$ is computed by \eqref{eq:PSGD} and $\eta = \sqrt{\frac{D^2}{2 L_f T}}$.
\end{theorem}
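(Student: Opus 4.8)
The plan is to prove Theorem~\ref{thm:OL-NN} by carrying out the decomposition in \eqref{eq:decomposition} and bounding each of the three terms $(I)$, $(II)$, $(III)$ using the tools already developed in the excerpt. First I would handle the outer terms $(I)$ and $(III)$: since $f_t$ is $L_f$-Lipschitz in $y$ by \eqref{eq:thm-asp}, for any fixed $\thetab$ we have $|f_t(y(\thetab;\xb),z) - f_t(y^0(\thetab;\xb),z)| \le L_f |y(\thetab;\xb) - y^0(\thetab;\xb)|$. Taking expectations and applying Jensen's inequality together with Proposition~\ref{prop:linearization}(2) gives $\EE_{\xb,z,\thetab^0}|f_t(y(\thetab;\xb),z) - f_t(y^0(\thetab;\xb),z)| \le L_f \sqrt{\EE_{\xb,\thetab^0}|y(\thetab;\xb)-y^0(\thetab;\xb)|^2} = O(L_f D^{3/2} m^{-1/4})$, uniformly in $\thetab \in \Theta$ (using that the bound in Proposition~\ref{prop:linearization} is uniform over $\Theta$). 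Averaging over $t$ then bounds both $(I)$ and $(III)$ by $O(L_f D^{3/2} m^{-1/4})$.

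For the middle term $(II)$, I would invoke Proposition~\ref{prop:noise-regret-bound} with the specialization $h(\thetab) = \frac12\|\thetab - \thetab^0\|_2^2$ (so that mirror descent becomes the Euclidean projected gradient descent \eqref{eq:PSGD}), the norm $\|\cdot\| = \|\cdot\|_2$, and $f_t^{\mathrm{lin}}(\thetab) := \EE_{\xb,z,\thetab^0}[f_t(y^0(\thetab;\xb),z)]$, which is convex in $\thetab$ because $y^0$ is linear in $\thetab$ and $f_t$ is convex in $y$. Here $\mub^t = \nabla_\thetab f_t(y(\thetab^t;\xb_{t+1}),z_{t+1})$ is the actual (neural-network) stochastic gradient used in \eqref{eq:PSGD}/\eqref{eq:SPGD}, whose conditional expectation given $\thetab^t$ is $\EE_{\xb,z}[\nabla_\thetab f_t(y(\thetab^t;\xb),z)]$; decomposing this as $\nabla f_t^{\mathrm{lin}}(\thetab^t) + \betab_t(\thetab^t)$ identifies the bias term $\betab_t(\thetab^t)$ with (the expectation of) $\nabla_y f_t \cdot (\nabla_\thetab y - \nabla_\thetab y^0)$ evaluated at $\thetab^t$. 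I need to check the boundedness hypotheses \eqref{eq:boundedness}: $\|\mub^t\|_2 = |\nabla_y f_t|\cdot\|\nabla_\thetab y(\thetab^t;\xb_{t+1})\|_2 \le L_f \cdot 1 = L_f$ by \eqref{eq:thm-asp} and Proposition~\ref{prop:linearization}(1), so $W = L_f$ works; and $\sup_{\thetab\in\Theta} h(\thetab) = D^2/2$, so $M = D^2/2$, which is consistent with the stated stepsize $\eta = \sqrt{D^2/(2L_f T)} = \sqrt{M/(TW)}$. Proposition~\ref{prop:noise-regret-bound} then gives, with probability $1-\delta$, that $(II) \le \frac32\sqrt{L_f D^2/(2T)} + 8 L_f \sqrt{D^2 \ln(1/\delta)/(2T)} + \frac{2\sqrt{D^2}}{T}\sum_t \|\betab_t(\thetab^t)\|_2 = O(L_f D \sqrt{\ln(1/\delta)/T}) + \frac{2D}{T}\sum_t\|\betab_t(\thetab^t)\|_2$, valid for any $\thetab\in\Theta$ (in particular the minimizer in hindsight).

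It remains to bound the bias sum $\frac1T\sum_t \|\betab_t(\thetab^t)\|_2$. Since $\betab_t(\thetab^t) = \EE_{\xb,z}[\nabla_y f_t(y(\thetab^t;\xb),z)(\nabla_\thetab y(\thetab^t;\xb) - \nabla_\thetab y^0(\thetab^t;\xb))]$ after taking the conditional expectation, Jensen and \eqref{eq:thm-asp} give $\|\betab_t(\thetab^t)\|_2 \le L_f\, \EE_{\xb,\thetab^0}\|\nabla_\thetab y(\thetab^t;\xb) - \nabla_\thetab y^0(\thetab^t;\xb)\|_2 = O(L_f D m^{-1/2})$ by Proposition~\ref{prop:linearization}(3), uniformly over $\thetab^t\in\Theta$; thus $\frac{2D}{T}\sum_t\|\betab_t(\thetab^t)\|_2 = O(L_f D^2 m^{-1/2})$. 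Combining the three bounds yields total error $O(L_f D^{3/2} m^{-1/4}) + O(L_f D\sqrt{\ln(1/\delta)/T}) + O(L_f D^2 m^{-1/2})$; since the $m^{-1/4}$ term dominates $m^{-1/2}$ for large $m$ and (noting $L_f \ge$ a constant, absorbing $D^2 m^{-1/2}$ into $L_f^2 D^{3/2} m^{-1/4}$ if one is not careful about constants, though more simply $D^2 m^{-1/2} \le D^{3/2} m^{-1/4}$ once $D \le \sqrt{m}$), this collapses to the claimed $O(L_f D \sqrt{\ln(1/\delta)/T} + L_f^2 D^{3/2} m^{-1/4})$.

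The main obstacle I anticipate is the bias term: carefully justifying that the bias $\betab_t(\thetab^t)$ defined pointwise in \eqref{eq:PSGD} has conditional expectation equal to $\EE_{\xb,z}[\nabla_y f_t \cdot(\nabla_\thetab y - \nabla_\thetab y^0)]$ and that Proposition~\ref{prop:linearization}(3) — an $L^1$-in-$\xb$ bound on the \emph{gradient} difference — is exactly what is needed (as opposed to an $L^2$ bound), and then threading this through the filtration so that the high-probability statement from Proposition~\ref{prop:noise-regret-bound} applies with the martingale-difference structure intact. A secondary subtlety is that the regret bound must hold against the in-hindsight minimizer of the \emph{neural-network} objective, which is why it is essential that \eqref{eq:decomposition} holds for every $\thetab\in\Theta$ and that terms $(I)$, $(III)$ are bounded uniformly; one must also confirm that the expectations defining $f_t^{\mathrm{lin}}$ are finite and that differentiation under the expectation is valid, which follows from the boundedness of $\Ycal$, $\Zcal$ and the Lipschitz/gradient-norm bounds, but should be stated.
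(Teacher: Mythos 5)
Your overall route is exactly the paper's: the three-term decomposition \eqref{eq:decomposition}, bounding $(I)$ and $(III)$ via the Lipschitz property of $f_t$ in $y$ together with Jensen and Proposition~\ref{prop:linearization}(2), and handling $(II)$ by invoking Proposition~\ref{prop:noise-regret-bound} with $h(\thetab)=\tfrac12\|\thetab-\thetab^0\|_2^2$, $M=D^2/2$, $W=L_f$. Those parts are correct and match the paper's proof.

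There is, however, a genuine error in your bias analysis. You identify $\betab_t(\thetab^t)$ with $\EE\bigl[\nabla_y f_t\cdot(\nabla_\thetab y-\nabla_\thetab y^0)\bigr]$, but the true bias is $\EE\bigl[\nabla_y f_t(y(\thetab^t;\xb),z)\,\nabla_\thetab y(\thetab^t;\xb)-\nabla_y f_t(y^0(\thetab^t;\xb),z)\,\nabla_\thetab y^0(\thetab^t;\xb)\bigr]$, in which the scalar factor $\nabla_y f_t$ is evaluated at \emph{different} arguments in the two terms. Adding and subtracting $\nabla_y f_t(y^0)\nabla_\thetab y$ splits this into the term you kept, $\nabla_y f_t(y^0)\bigl[\nabla_\thetab y-\nabla_\thetab y^0\bigr]$, \emph{plus} the cross term $\bigl[\nabla_y f_t(y)-\nabla_y f_t(y^0)\bigr]\nabla_\thetab y$, which you dropped. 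That cross term is controlled by the Lipschitz-gradient half of \eqref{eq:thm-asp} (which your argument otherwise never uses --- a warning sign) together with Proposition~\ref{prop:linearization}(2), and it contributes $O(L_f^2 D^{3/2}m^{-1/4})$, which dominates the $O(L_f D\,m^{-1/2})$ piece you retained; it is precisely the source of the $L_f^2 D^{3/2}m^{-1/4}$ term in the theorem's statement. The fix is mechanical, but as written your bias bound is incorrect and your final rate ($L_f$ rather than $L_f^2$ on the approximation term) does not follow from the argument given.
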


\begin{proof}[Proof of Theorem~\ref{thm:OL-NN}] 
    See Appendix~\ref{apd:OL-NN} for a proof.
\end{proof}

The smoothness condition is necessary for controlling the approximation error via local linearization. Note that when the width $m \rightarrow \infty$, the regret in hindsight is arbitrarily small as long as $T$ is large enough, which reduces to the case of classical online learning. Equipped with Theorem~\ref{thm:OL-NN} and Proposition~\ref{thm:equilibrium-optimization}, We arrive our final result in the next section.

\subsection{Global Convergence for Overparameterized Neural Netowrk} \label{subsection:global-convergence}

We prove our main theorem on global convergence regarding provable training neural network classifiers with non-convex constraints. As we can see in Theorem~\ref{thm:OL-NN} and its proof, the smoothness condition in \eqref{eq:thm-asp} is needed to apply local linearization. Thus, we make the following regularity condition.

\begin{assumption}[Regularity of Objectives and constraints] \label{assumption:Regularity-OC}
	Assume that for any $y, y^\prime \in \Ycal, z \in \Zcal$, $\xib, \xib^\prime \in \Xi$, $k=1,\dots, K$, $j=1,\dots, J$,
	\begin{enumerate}
		\item $|\nabla h_0(y,z)| \leq L$ and $|\nabla_y h_0(y,z) - \nabla_y h_0(y^\prime,z)|\leq L |y-y^\prime|$;
		
		\item $|\nabla  \tilde{h}_k(y,z)| \leq L$ and $|\nabla_y \tilde{h}_k(y,z) - \nabla_y \tilde{h}_k(y^\prime,z)|\leq L |y-y^\prime|$;
		
		\item $\|\nabla_{\xib} g_j(\xib) - \nabla_{\xib} g_j(\xib^\prime)\| \leq L \|\xib-\xib^\prime\|_\infty$;
		
		\item $\|\nabla_{\xib} g_j(\xib) \| \leq L $.
		
		 %in $[0,1]^K$ for all $j=1,\dots, K$.
		% \item $\|x\| \leq 1$ for $x \in \Xcal$;
		% \item for any unit vector $e$ and a constant $d>0$, there exists $c>0$, such that
		% \[
		%     \PP_x (|e^\top x| \leq d) \leq c d.
		% \]
	\end{enumerate}
\end{assumption}

Note that Assumption~\ref{assumption:Regularity-OC} always holds thanks to the compactness of $\Ycal, \Zcal$ and the continuity of $h_0, \tilde{h}_k, g_j$ from Assumption~\ref{asp:1}. Assumption~\ref{assumption:Regularity-OC} is imposed only because it allows us to characterize the convergence rate explicitly.

Our main result shows that if the classifier is parameterized by a neural network \eqref{eq:NN}, and we run projected stochastic gradient descent in Algorithm~\ref{alg:surrogat-based-optimizer}, then we obtain a nearly optimal and nearly feasible solution of the constrained optimization problem \eqref{eq:optimization-2}. Without loss of generality, we assume $D, \kappa \geq 1$.

\begin{theorem}
\label{thm:main-thm}
Suppose that Assumption~\ref{asp:1}, \ref{assumption:Regularity-Data} and \ref{assumption:Regularity-OC} hold. Set 
\begin{equation*}
    \eta_{\theta}= \sqrt{\frac{D^2}{4 \kappa K L T}}, \ \ \  \eta_{\xi} = \sqrt{\frac{C}{2 T\kappa J L K^{1/2}}}, \ \ \ \eta_{\lambda} = \sqrt{\frac{\kappa}{2 L C  K^{1/2} J^{1/2} T}}.
\end{equation*}
Then, with probability $1-3\delta$, the iterates $\{\thetab^t, \xib^t, \lambdab^t\}_{t=1}^T$ 
of Algorithm~\ref{alg:surrogat-based-optimizer} comprise an 
approximate coarse-correlated equilibrium
\eqref{eq:OL-xi}, \eqref{eq:OL-theta}, \eqref{eq:OL-lambda} with
\begin{equation} \label{eq:equilibrium}
    \begin{split}
        \epsilon_\theta =& O\left( \kappa L D \sqrt{\frac{\ln(1/\delta)}{T}} +  \frac{\kappa (K\kappa L)^2 D^{3/2}}{m^{1/4}}\right), \\
        \epsilon_\xi =& O\left(\frac{\kappa D L^2 \sqrt{\ln(1/\delta)}}{\sqrt{T}} \right), \\ 
        \epsilon_\lambda =& O\left( \frac{D L^2 \sqrt{\kappa \ln(1/\delta)}}{\sqrt{T}} \right).
    \end{split}    
\end{equation}
Moreover, with probability $1-3\delta-B/D$, we have
\begin{equation} \label{eq:optimal-2}
    \frac{1}{T} \sum_{t=1}^T r_0(\thetab^t) - \min_{\thetab \in \Theta \cap \tilde{\Fcal} } r_0(\thetab) \leq
    O\left(  \frac{\kappa L D^{3/2} \sqrt{ \ln(1/\delta)}}{\sqrt{T}} + \frac{\kappa D^{5/2}}{m^{1/4}} \right),
\end{equation}
and, for $j=1,\dots, J$,
\begin{equation} \label{eq:feasible-2}
    g_j \left( \frac{1}{T} \sum_{t=1}^T \rb(\thetab^t)  \right)
    \leq O\left(\frac{ L D^{3/2} \sqrt{ \ln(1/\delta)}}{\sqrt{T}}+ \frac{L D^2}{\kappa} + \frac{ D^{5/2}}{m^{1/4}} \right),
\end{equation}
where $\tilde{\Fcal} = \{\thetab : \max_j g_j(\tilde{\rb}(\theta))\leq 0\}$.
\end{theorem}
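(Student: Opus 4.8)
The plan is to assemble the final theorem from the three ingredients already in place: the equilibrium-to-optimality reduction (Proposition~\ref{thm:equilibrium-optimization}), the no-regret bound for online learning with neural networks (Theorem~\ref{thm:OL-NN}), and the classical convex online mirror descent bound (Lemma~\ref{lemma:regret-bound}) for the $\xib$ and $\lambdab$ updates. First I would verify that the three update blocks of Algorithm~\ref{alg:surrogat-based-optimizer} are exactly three coupled instances of online learning, where at round $t$ the loss seen by the $\thetab$-player is $\thetab\mapsto\tilde{\Lcal}_2(\thetab,\lambdab^t)$, that seen by the $\xib$-player is $\xib\mapsto\Lcal_1(\xib,\lambdab^t)$, and that seen by the $\lambdab$-player is $\lambdab\mapsto-\Lcal(\thetab^t,\xib^t,\lambdab)$; all three are adversarial in the sense that they depend on the other players' current iterates, which is precisely the regime covered by the cited no-regret lemmas. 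This identification turns \eqref{eq:OL-xi}, \eqref{eq:OL-theta}, \eqref{eq:OL-lambda} into three separate regret bounds.

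Next I would bound each $\epsilon$ in turn. For $\epsilon_\theta$: the loss $\tilde{\Lcal}_2(\thetab,\lambdab^t)=r_0(\thetab)+\sum_k\lambda_{J+k}^t\tilde r_k(\thetab)$ is a nonnegative combination (with weights bounded by $\kappa$, and $K$ terms) of compositions of convex losses $h_0,\tilde h_k$ with the neural network $y(\thetab;\xb)$; apply Theorem~\ref{thm:OL-NN} with $f_t$ the weighted sum, whose Lipschitz/smoothness constant $L_f$ is $O(\kappa K L)$ by Assumption~\ref{assumption:Regularity-OC} and the triangle inequality, and whose gradient magnitude inherits the $\|\nabla y\|_2\le1$ bound from Proposition~\ref{prop:linearization}. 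Plugging $L_f=O(\kappa K L)$ and the stated $\eta_\theta$ into the conclusion of Theorem~\ref{thm:OL-NN} gives $\epsilon_\theta=O(\kappa KL\cdot D\sqrt{\ln(1/\delta)/T}+(\kappa KL)^2D^{3/2}/m^{1/4})$, which matches \eqref{eq:equilibrium} up to the bookkeeping of how powers of $\kappa,K$ are distributed. For $\epsilon_\xi$ and $\epsilon_\lambda$: here the losses $\Lcal_1(\cdot,\lambdab^t)$ and $-\Lcal(\thetab^t,\xib^t,\cdot)$ are genuinely convex (indeed $\Lcal_1$ and $\Lcal$ are linear in $\xib$ and $\lambdab$ respectively modulo the $g_j(\xib)$ terms, which are convex), so Lemma~\ref{lemma:regret-bound} applies directly with Euclidean mirror map $h(\cdot)=\tfrac12\|\cdot\|_2^2$; I would read off $W$ (the gradient norm bound, controlled by $L,C,\kappa,J,K$ via \eqref{eq:grad}, \eqref{eq:grad-est} and Assumption~\ref{assumption:Regularity-OC}) and $M$ (the squared radius of $\Xi$, resp.\ $\Lambda$, which is $O(C)$ resp.\ $O(\kappa^2)$ up to dimension factors), and check that the prescribed $\eta_\xi,\eta_\lambda$ are exactly the optimal stepsizes $\sqrt{M/(WT)}$ for those $W,M$. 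Since the $\lambdab$- and $\xib$-gradients in \eqref{eq:grad-est} are unbiased stochastic estimates, a standard Azuma/Hoeffding argument (or the bias-free specialization of Proposition~\ref{prop:noise-regret-bound}) converts the expected regret into a high-probability bound, absorbing the three failure events into the $1-3\delta$ statement. Finally, feeding $\epsilon_\theta,\epsilon_\xi,\epsilon_\lambda$ into \eqref{eq:optimal} and \eqref{eq:feasible} of Proposition~\ref{thm:equilibrium-optimization}, intersecting the additional $1-B/D$ event from Proposition~\ref{prop:y-bd}, and simplifying (using $D,\kappa\ge1$ to collapse lower-order terms — note the $\kappa D^2/\kappa = D^2$ telescoping that produces the $LD^2/\kappa$ feasibility term and the $\kappa\cdot(\kappa KL)^2 D^{3/2}/m^{1/4}$ optimality term) yields \eqref{eq:optimal-2} and \eqref{eq:feasible-2}.

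The main obstacle I expect is the careful propagation of the constants $\kappa,K,J,L,C$ and the powers of $D$ through the composition of the three results, so that the stated stepsizes $\eta_\theta,\eta_\xi,\eta_\lambda$ are genuinely the balancing choices and the final exponents ($D^{3/2}$, $D^{5/2}$, $m^{1/4}$, $\kappa$, $L$) come out exactly as in \eqref{eq:optimal-2}--\eqref{eq:feasible-2}. In particular one must be careful that the $\lambdab$-regret term \eqref{eq:OL-lambda} involves $\Lcal$ (with the \emph{true} $h_k$), not $\tilde\Lcal$, so the $\lambdab$-gradient estimator in \eqref{eq:grad-est} uses $h_k$ and its boundedness (part~2 of Assumption~\ref{asp:1}) rather than smoothness — this is the reason $\epsilon_\lambda$ has no $m^{-1/4}$ term. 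A secondary subtlety is that Theorem~\ref{thm:OL-NN} is stated for an online learner facing convex-in-$y$ losses from a \emph{fixed} data distribution, whereas here the multiplier weights $\lambda_{J+k}^t$ change adversarially; one must observe that this only rescales $f_t$ round by round and does not break convexity in $y$, so the theorem still applies with the uniform constant $L_f=O(\kappa KL)$. Everything else is routine substitution and union bounds.
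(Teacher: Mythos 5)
Your proposal is correct and follows essentially the same route as the paper's proof: apply Theorem~\ref{thm:OL-NN} to the $\thetab$-player with $f_t = h_0 + \sum_k \lambda_{J+k}^t \tilde h_k$ and $L_f = O(\kappa K L)$, apply Proposition~\ref{prop:noise-regret-bound} to the convex $\xib$- and $\lambdab$-problems with the gradient/radius bounds read off from Assumptions~\ref{asp:1} and~\ref{assumption:Regularity-OC}, take a union bound, and feed the three regret bounds into Proposition~\ref{thm:equilibrium-optimization}. The subtleties you flag (the $\lambdab$-regret involving $\Lcal$ with the true, merely bounded $h_k$, and the adversarial per-round rescaling by $\lambdab^t$ not affecting convexity in $y$) are exactly the points the paper's argument relies on.
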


\begin{proof}
    See Appendix~\ref{apd:main-thm} for a proof.
\end{proof}

To the best of our knowledge, Theorem~\ref{thm:main-thm} is the first result that shows convergence for non-convex constrained optimization problems with neural networks. Compared to~\cite{narasimhan2019optimizing}, we do not require any best-response function or oracle for optimization and obtain a similar convergence rate if ignoring the approximation error terms involving the width $m$. Most importantly, our result allows sophisticated neural networks that are non-convex and hard to analyze rather than simple linear models. We can see that as the width $m$ increases, the approximation error in \eqref{eq:optimal-2} and \eqref{eq:feasible-2} converges to zero. This result shows that an infinitely wide neural network could achieve an optimal and nearly feasible solution similar to linear models. Moreover, there is a trade-off in the error bound for optimality in \eqref{eq:optimal-2} and feasibility \eqref{eq:feasible-2}: a larger $\kappa$ to penalize the violation of constraints improves feasibility and results in a better error bound in \eqref{eq:feasible-2}, while increasing $\kappa$ hurts the convergence of Algorithm~\ref{alg:surrogat-based-optimizer}.

\section{Experiment on COMPAS}
\label{sec:Experiments}

We illustrate our algorithm and theorems by a real application in algorithmic fairness. COMPAS \cite{dressel2018accuracy} is a data set provided by ProPublicas and contains criminal history, jail and prison time, demographics, and COMPAS risk scores for Broward County defendants. The goal of our experiments is to predict recidivism under fairness constraints. That is, we would like to ensure that some protected subgroups are treated equally by prediction models. We pre-processed the data and removed observations
with some missing features and used one-hot encoding for categorical features. After removing data points for which some of the features are missing, we have 6172 samples in COMPAS. The data are then randomly divided into two groups: 70\% of the samples are for training and 30\% samples are for testing. The classifier we use here is a 2 layer neural network with $m=201$ hidden units. We restrict the weights in the ball with radius $D=10$ and the Lagrange multipliers to $\|\lambda\|_\infty < \kappa = 1$. Cross-entropy loss is used as a measurement of classification loss, and hinge loss is used as a differentiable convex surrogate of 0-1 loss.

In our experiment, the protected groups are two races,
African-American and Caucasian, and we aim to treat each protected
group equally. For comparison, we consider different methods as follows:
\begin{enumerate}
    \item ``Unconst.'' is the model without any constraint.
    \item ``$T$-Stoch.'' is the model with the fairness constraint and trained by Algorithm~\ref{alg:surrogat-based-optimizer}. It has a set of $T$ parameters and randomly picks one of those parameters uniformly to make a prediction.
    \item  ``$J$-Stoch.''  is the model with the fairness constraints and trained by Algorithm~\ref{alg:surrogat-based-optimizer}. $J$-Stoch also uses random prediction as $T$-Stoch, but only uses $J+1$ parameters at most by the shrinking procedure described in Appendix~\ref{apd:shrinking}.
    \item ``Last'' is the model with the fairness constraint and trained by Algorithm~\ref{alg:surrogat-based-optimizer} and uses the weight in the last iteration.
    \item ``Best'' is the model with fairness constraint and trained by Algorithm~\ref{alg:surrogat-based-optimizer} and uses the weight having the smallest loss.
\end{enumerate}

First, to show the unfair treatment hiding in the classification results, we train a neural network without any constraint. The result is shown in the first line in Table~\ref{table:COMPAS}. We can see that there are no significant differences in accuracy between black and white defendants. However, if we investigate more carefully, there is a huge difference in recall, defined as the positive classification rate of the classifier, which means the classifier fails in different ways for two protected groups and tends to predict black defendants as more likely to reoffend. Note that removing the race feature cannot solve this unfair treatment and would reduce accuracy since there are other features correlated with race.

To address this issue, we define $\Dcal_B$ and $\Dcal_W$ to be conditional distributions given the target of instance is positive and the race of instance is African-American and Caucasian, respectively. Then, our goal is to optimize
\begin{equation} \label{eq:fairBW}
    \begin{split}
        \min_{\thetab \in \Theta} & \  \EE_{(\xb,z)\sim \Dcal_p} h_0(y(\thetab;\xb)), \\
        \text{s.t.} & \  \EE_{ (\xb, z) \sim \Dcal_{B}} \ind \{z = \sgn(y(\thetab; \xb))\} =
        \EE_{ (\xb, z) \sim \Dcal_{W}} \ind \{z = \sgn(y(\thetab; \xb))\},
    \end{split}
\end{equation}
where $\Dcal_p$ is the data distribution. Heuristically, we only record the weights at the end of every epoch and discard the first 1000 iterations. The $T$-stochastic classifier would uniformly pick one of those weights and make a prediction. The solution is further shrank using \eqref{eq:shrinking} to establish a $J$-stochastic classifier, which only uses 2 weights in our experiment.

\begin{table*}[t]
    \centering
    \resizebox{0.95\textwidth}{!}{
    \begin{tabular}{l|rrrrr|rrrrr}
        \toprule
        & \multicolumn{5}{c|}{Train (\%)}            & \multicolumn{5}{c}{Test (\%)}             \\
        Algo.  & A   & A(B) & A(W) & R(B) & R(W) & A   & A(B) & A(W) & R(B) & R(W) \\ \midrule
        Unconst.   & 74.42 & 72.94 & 74.88 & 59.07 & 39.62 & 65.60 & 64.66 & 67.63 & 50.0  & 28.87 \\
        $J$-Stoch. & 75.23 & 71.80 & 77.84 & 55.60 & 55.40 & 65.98 & 64.97 & 64.87 & 48.82 & 41.42 \\
        $T$-Stoch. & 72.31 & 68.14 & 74.14 & 49.86 & 51.97 & 63.44 & 62.59 & 61.34 & 46.27 & 41.00 \\
        Last       & 75.09 & 74.57 & 74.41 & 71.67 & 45.96 & 67.17 & 66.73 & 66.73 & 66.07 & 36.82 \\
        Best       & 75.99 & 73.30 & 77.97 & 61.51 & 53.68 & 66.73 & 66.01 & 66.01 & 55.29 & 40.16 \\ \bottomrule
    \end{tabular}
    }
    \caption{COMPAS Experiment Results. All numbers represent a percentage. A stands for accuracy and R stands for recall. B and W represent black and white defendants, respectively.} \label{table:COMPAS}
\end{table*}

The results are shown in Table~\ref{table:COMPAS} and Figure~\ref{fig:COMPAS}.
We can see that the stochastic classifiers perform similarly to the unconstrained classifier in terms of accuracy, as well as accuracy in different protected groups. Furthermore, the recall of African-Americans aligns with the recall of Caucasian for stochastic classifiers in the training set. For the test set, we also observe a significant improvement in unfair treatment. This means that we fairly treat different subgroups, without losing any predictive power. The same phenomenon cannot be observed for the "Last" and "Best" classifiers. Thus, the randomized model is indeed useful for meeting fairness constraints, in general.

\begin{figure*}[t]
    \centering
    \includegraphics[width=0.47\textwidth]{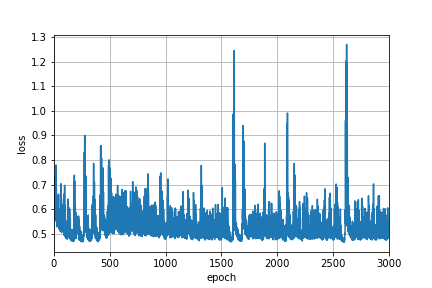}
    \includegraphics[width=0.47\textwidth]{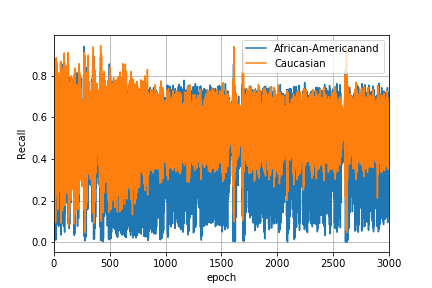}
    \caption{The plots for the loss and recall of two protected groups         for each iteration during training on the COMPAS data set with equal opportunity constraints. We can see that there is some oscillation in two plots. The oscillation caused by conflicting between two constraints in \eqref{eq:fairBW} and violating one of them alternately shows that there is no pure equilibrium to converge to.}
    \label{fig:COMPAS}
\end{figure*}

We also plot the loss and recalls of two protected groups for each of the iterates in Figure~\ref{fig:COMPAS}. We observe oscillation which is caused by violating constraints alternately and suggests that no pure equilibrium exists and that a stochastic classifier may be necessary. The oscillation may commonly occur in practice, especially for optimizing non-convex and non-smooth Lagrangian.

We can understand how randomized models work to achieve the fairness constraint based on this oscillation. Assume that we could find two models with the same high accuracy but different behavior on recalls. One has a high recall on black defendants, and the other has a high recall on white defendants. If we randomly select one model to make a prediction, the average result will balance the recall on two races, which leads to a fair classifier. Neural networks particularly fit this perspective, since overparameterized neural networks can represent many different but high-performing models corresponding to different parameters. We may find several candidate parameters with high accuracy but distinct prediction behaviors when adding various constraints. Then mixing candidate models indeed can improve fairness via the previous argument. The minimax optimization framework can be interpreted as an automated strategy to encourage the neural network to explore different-behavior parameters.

\section{Conclusion}
\label{sec:Discussion}

This work shows how to provably train neural network models under non-convex constraints via a minimax framework. Despite the difficulty of non-convex neural network models, we establish a no-regret bound for online learning of neural networks. Our results shed new light on the theoretical understanding of convergence properties of neural networks with a non-convex constrained optimization problem.

One possible avenue for future work is developing an online procedure \cite{agrawal2014dynamic, li2019online} for shrinkage with theoretical guarantees, which can also reduce memory costs during training. Moreover, randomized prediction violates different principles of fairness, such as ``similar individuals receive similar outcomes'' \cite{dwork2012fairness}. 
\cite{cotter2019making} discussed clustering and ensemble procedures to decrease randomness while satisfying the fairness defined from an individual perspective, while it may be computationally intractable for large models. It may be interesting to find more memory and computationally efficient algorithms for neural networks.

% \begin{acks}[Acknowledgments]
% \end{acks}

%%%%%%%%%%%%%%%%%%%%%%%%%%%%%%%%%%%%%%%%%%%%%%
%% Supplementary Material, if any, should   %%
%% be provided in {supplement} environment  %%
%% with title and short description.        %%
%%%%%%%%%%%%%%%%%%%%%%%%%%%%%%%%%%%%%%%%%%%%%%
%\begin{supplement}
%\stitle{???}
%\sdescription{???.}
%\end{supplement}

\appendix

\section{Shrinking}
\label{apd:shrinking}

From Proposition~\ref{thm:equilibrium-optimization} we observe that a stochastic model is required to achieve a near-optimal and near-feasible solution. In our setting, the prediction model is parameterized by a neural networks, which may have a considerable amount of parameters and need a large number of iterations for training. As a result, storing all parameters for all iterations for randomized prediction may be intractable. To overcome this difficulty, it can be shown that a smaller mixed equilibrium exists and can be found by using a shrinkage procedure. In particular, let $\thetab^1, \dots, \thetab^t \in \Theta$ be a sequence of $T$ iterates obtained by Algorithm~\ref{alg:surrogat-based-optimizer}. Define $\cbb_0=(h_0(\thetab_1),\dots, h_0(\thetab^T))^\top$ and  $\cbb_j=(g_j(\rb(\thetab^1)),\dots, g_j(\rb(\thetab^T)))^\top$ for $j=1,\dots,J$. The shrinking procedure aims to solve the following linear programming problem:
\begin{equation} \label{eq:shrinking}
	\begin{split}
		\min_{\pb \in T\text{-dimensional simplex}} & \cbb_0^\top \pb \\
		\text{s.t. } & \cbb_j^\top \pb \leq \epsilon, \quad \text{for } j=1,\dots, J,
	\end{split}
\end{equation}
for some $\epsilon>0$. The minimizer $\pb^\ast=(p_1^\ast, \dots, p_T^\ast)$ of \eqref{eq:shrinking} represents the final stochastic classifier, with components representing the probability of sampling $\{1,\dots, T\}$.
Note that due to the convexity of $g_j$, we have for all $j$ that
\[
g_j \left( \sum_{t=1}^T p_t^\ast \rb(\thetab^t) \right) \leq \cbb_j^\top \pb^\ast \leq \epsilon,
\] 
implying that the stochastic solution induced by $\pb^\ast$ is near-feasible in~\eqref{eq:optimization-2} and achieves the smallest training error among all stochastic models.

\cite{cotter2019optimization} showed that every vertex of the feasible region and the optimal solution $\pb^\ast$ have at most $J+1$ nonzero elements. That is, the shrinkage procedure selects at most $J+1$ iterations for constructing randomized solution and there are only $J+1$ non-zero elements in $\pb^\ast$, which reduces memory cost significantly. Since $J$ is much smaller than $T$, heuristically, we only sample a small amount of iterates compared to $T$ in our experiment where $J$ is the number of constraints and $T$ is the total number of iterates.

\section{Proofs}

\subsection{Proof of Proposition~\ref{prop:y-bd}} \label{apd:y-bd}

We first show that $\EE |y(\thetab^0;\xb)|$ is finite and then apply Markov's inequality. Since $\|\xb\|\leq 1$, we have
\begin{equation} \label{eq:moment}
	\EE (\sigma((\ab_i^0)^\top \xb))^2 \leq \EE ((\ab_i^0)^\top \xb)^2 \leq \EE \|\ab_i^0\|^2 = 1.
\end{equation}
Thus, using Khintchine's inequality (Theorem 1 in Chapter 10.3 in \cite{chow2003probability}) and the fact that $b_i \sim \operatorname{Uniform}(\{-1,1\})$, there exists a constant $B$ such that
\begin{align}
 \EE |y(\thetab^0;\xb)| & = \EE \left|\frac{1}{\sqrt{m}} \sum_{i=1}^{m} b_i \sigma((\ab_i^0)^\top \xb) \right| \\
& = \EE \left[ \EE \left|y=\frac{1}{\sqrt{m}} \sum_{i=1}^{m} b_i \sigma((\ab_i^0)^\top \xb) \right| \mid \ab_i^0, \xb \right] \\
& \leq \frac{B}{\sqrt{m}} \EE \left(\sum_{i=1}^m (\sigma((\ab_i^0)^\top \xb))^2\right)^{1/2} && (\text{Khintchine inequality})\\
& \leq \frac{B}{\sqrt{m}} \left( \EE \sum_{i=1}^m (\sigma((\ab_i^0)^\top \xb))^2\right)^{1/2}  && (\text{Jensen's inequality})\\
& \stackrel{\eqref{eq:moment}}{\leq} B.
\end{align}
Markov's inequality implies that $\PP(|y|>D)\leq {B}/{D}$.
Moreover, we obtain
\begin{align}
| &y(\thetab;\xb) - y(\thetab^0;\xb)|^2 \\
& = \left| \frac{1}{\sqrt{m}} \sum_{i=1}^{m} b_i \sigma(\ab_i^\top \xb)- \frac{1}{\sqrt{m}} \sum_{i=1}^{m} b_i \sigma((\ab_i^0)^\top \xb) \right|^2 \\
&\leq \sum_{i=1}^{m} \left| \sigma(\ab_i^\top \xb) - \sigma((\ab_i^0)^\top \xb) \right|^2 && (\text{since } (\sum_{i=1}^{m} c_i)^2 \leq m \sum_{i=1}^m c_i^2)\\
&\leq \sum_{i=1}^{m} \left| \ab_i^\top \xb - (\ab_i^0)^\top \xb \right|^2  && (\text{$\sigma$ is 1-Lipschitz continuous})\\
& \leq \sum_{i=1}^{m} \| \ab_i - \ab_i^0\|^2 && (\|\xb\|\leq 1) \\
& \leq D^2.
\end{align}
The proposition follows by the triangle inequality.

\subsection{Proof of Proposition~\ref{thm:equilibrium-optimization}} \label{apd:equilibrium-optimization}

We prove two guarantees in Proposition~\ref{thm:equilibrium-optimization}: optimality in \eqref{eq:optimal} and feasibility in \eqref{eq:feasible} by properties \eqref{eq:OL} of the coarse-correlated equilibrium. First, we define $\bar{\lambdab} =\frac{1}{T} \sum_{t=1}^T \lambdab^t$ and $\bar{\xib} =\frac{1}{T} \sum_{t=1}^T \xib^t$.
	
{\bf Optimality.} By Proposition~\ref{prop:y-bd}, we have $|y(\thetab;\xb)| <2D$ with probability $1-B/D$. Therefore, on the event $\{ |y(\thetab;\xb)| <2D \}$ we can always find a $\xib \in \Xi$ such that $\tilde{\rb}(\thetab)\leq \xib$. Then, we know that the Lagrangian gives a lower bound for the original problem. That is, for any $\lambdab \geq \zero$, we have 
\begin{equation} \label{eq:proposition3}
\begin{split}
	\min_{\thetab \in \Theta, \xib \in \Xi} \tilde{\Lcal}(\thetab,\xib,\lambdab) 
	&\leq \min_{\thetab \in \Theta, \xib \in \Xi: \gb(\xib)\leq \zero, \tilde{\rb}(\thetab)\leq \xib } \tilde{\Lcal}(\thetab,\xib,\lambdab) \\
	&\leq \min_{\thetab \in \Theta, \xi \in \Xi: \gb(\xib)\leq \zero, \tilde{\rb}(\thetab)\leq \xib} h_0(\thetab) \\
	&= \min_{\thetab \in \Theta: \max_j g_j (\tilde{\rb}(\thetab))} h_0(\thetab),
\end{split}
\end{equation}
where the first inequality follows due to the fact that we restrict the search space, the second inequality holds because of $\lambdab \geq \zero, \gb(\xib)\leq \zero, \tilde{\rb}(\thetab)\leq \xib$, and the equality follows from the monotonicity of $g_j$.
	
From the definition of the equilibrium, we obtain
\begin{align} 
\frac{1}{T} 
\sum_{t=1}^T \Lcal (\thetab^t, \xib^t, \lambdab^t) 
& \stackrel{\eqref{eq:upper-bound}}{\leq} \frac{1}{T} \sum_{t=1}^T \tilde{\Lcal} (\thetab^t, \xib^t, \lambdab^t). \notag \\
\intertext{Using \eqref{eq:OL-xi} and \eqref{eq:OL-theta}, we further have}
& \stackrel{}{\leq} \min_{\xib \in \Xi} \frac{1}{T} \sum_{t=1}^T \Lcal_1(\xib,\lambdab^t) + \epsilon_\xi 
+ \min_{\thetab \in \Theta} \frac{1}{T} \sum_{t=1}^T \tilde{\Lcal}_2(\thetab,\lambdab^t)  + \epsilon_\theta \notag \\  
& = \min_{\thetab \in \Theta, \xib \in \Xi} \tilde{\Lcal} (\thetab, \xib, \bar{\lambdab}) + \epsilon_\theta + \epsilon_\xi \\
& \leq \min_{\thetab \in \Theta: \max_j g_j(\tilde{\rb}(\thetab))\leq 0} h_0(\thetab) + \epsilon_\theta + \epsilon_\xi, 
 \label{eq:theta-xi-equilibruim}
\end{align}
where the equality holds due to the linearity of $\tilde{\Lcal}$ with respect to $\lambdab$,
and the last line follows from $\bar{\lambdab}>\zero$ and \eqref{eq:proposition3}.
	
We have for any $\lambdab^\prime \in \Lambda$
\begin{equation} \label{eq:lagrangian-upper-bound}
\begin{split}
	\frac{1}{T} \sum_{t=1}^T \Lcal (\thetab^t, \xib^t, \lambdab^\prime) 
	&\leq  \max_{\lambdab \in \Lambda} \frac{1}{T} \sum_{t=1}^T \Lcal (\thetab^t, \xib^t, \lambdab) \\ 
	& \stackrel{\eqref{eq:OL-lambda}}{\leq} \frac{1}{T} \sum_{t=1}^T {\Lcal}(\thetab^t, \xib^t, \lambdab^t) + \epsilon_\lambda \\
	&\stackrel{\eqref{eq:theta-xi-equilibruim}}{\leq}  \min_{\thetab \in \Theta: \max_j g_j(\tilde{\rb}(\thetab))\leq 0} h_0(\thetab) + \epsilon_\theta + \epsilon_\xi + \epsilon_\lambda.
\end{split}
\end{equation}
The optimality follows by setting $\lambdab^\prime = \zero$.
	
{\bf Feasibility.} Letting $j^\prime \in \argmax_j g_j(\bar{\xib})$ and setting $\lambda_{j^\prime}^\prime =\kappa$ and $\lambda^\prime_j=0$ for $j\neq j^\prime$, we get
\begin{equation} \label{eq:optimality}
\begin{split}
	 \frac{1}{T} \sum_{t=1}^T h_0(\thetab^t) + \frac{\kappa}{T} \sum_{t=1}^T g_{j^\prime} (\xib^t)
	&\stackrel{\eqref{eq:Lagrangian-1}}{=} \frac{1}{T} \sum_{t=1}^T \Lcal (\thetab^t, \xib^t, \lambdab^\prime) \\
	& \stackrel{\eqref{eq:lagrangian-upper-bound}}{\leq} \min_{\thetab \in \Theta: \max_j g_j(\tilde{\rb}(\thetab))\leq 0} h_0(\thetab) + \epsilon_\theta + \epsilon_\xi + \epsilon_\lambda
\end{split}
\end{equation}
and, using convexity of $g_j$,
\begin{align} 
\max_j g_{j}(\bar{\xib}) 
& \leq \frac{1}{T} \sum_{t=1}^T g_{j^\prime} (\xib^t) 
\notag \\ 
& \stackrel{\eqref{eq:optimality}}{\leq} \frac{1}{\kappa} \left( \min_{\thetab \in \Theta: \max_j g_j(\tilde{\rb}(\thetab))\leq 0} h_0(\thetab) - \frac{1}{T} \sum_{t=1}^T h_0(\thetab^t) + \epsilon_\theta + \epsilon_\xi + \epsilon_\lambda \right) 
\notag \\
& \leq \frac{2C + \epsilon_\theta + \epsilon_\xi + \epsilon_\lambda}{\kappa},
\label{eq:ineq1}
\end{align}
where the inequality follows from part 3 of Assumption~\ref{asp:1}.
Similarly, letting $k^\prime \in \argmax_k \left( \frac{1}{T}\sum_t r_k(\thetab^t) - \bar{\xi}_k \right)$ and setting $\lambda_{J+k^\prime}^\prime = \kappa$ and $\lambda_{J+k^\prime} =0$ for $k\neq k^\prime$, we obtain
\begin{multline} \label{eq:optimality-2}
\frac{1}{T} \sum_{t=1}^T h_0(\thetab_t) + \kappa \left( \frac{1}{T} \sum_{t=1}^T r_{k^\prime}(\thetab^t) - \bar{\xi}_{k^\prime} \right) 
 \stackrel{\eqref{eq:Lagrangian-1}}{=} \frac{1}{T} \sum_{t=1}^T \Lcal (\thetab^t, \xib^t, \lambdab^\prime) \\
 \stackrel{\eqref{eq:lagrangian-upper-bound}}{\leq} \min_{\thetab \in \Theta: \max_j g_j(\tilde{\rb}(\thetab))\leq 0} h_0(\thetab) + \epsilon_\theta + \epsilon_\xi + \epsilon_\lambda,
\end{multline}
which implies
\begin{multline}\label{eq:ineq2}
\max_k \frac{1}{T}\sum_{t=1}^T r_{k}(\thetab^t) - \bar{\xi}_k \stackrel{\eqref{eq:optimality-2}}{\leq} ( 2C + \epsilon_\theta + \epsilon_\xi + \epsilon_\lambda ) / \kappa \\
\Longrightarrow \max_k \max\left\{\frac{1}{T}\sum_{t=1}^T r_{k}(\thetab^t) - \bar{\xi}_k, 0 \right\} 
			\leq ( 2C + \epsilon_\theta + \epsilon_\xi + \epsilon_\lambda ) / \kappa.
\end{multline}
Therefore, part 4 of Assumption~\ref{asp:1} implies that
\begin{align*}
\max_j g_j \left(  \frac{1}{T}\sum_{t=1}^T \rb(\thetab^t) \right) 
& \leq  \max_j g_j\left(  \max\left\{ \frac{1}{T}\sum_{t=1}^T \rb(\thetab^t)- \bar{\xib}, 0\right\} + \bar{\xib} \right) \\
\intertext{as $g_j$ is increasing. Furthermore, as $g_j$ is L-Lipschitz and increasing, we have}
& \leq \max_j g_j(\bar{\xib}) + L  \left\|\max\left\{ \frac{1}{T}\sum_{t=1}^T \rb(\thetab^t) - \bar{\xib}, 0 \right\} \right\|_\infty 
\\
& = \max_j g_j(\bar{\xib}) + L  \max_k \max\left\{ \frac{1}{T}\sum_{t=1}^T r_k(\thetab^t) - \bar{\xi}_k, 0 \right\}  \\
	& \leq (1+L) ( 2C + \epsilon_\theta + \epsilon_\xi + \epsilon_\lambda ) / \kappa, 
\end{align*}
where the last line follows from \eqref{eq:ineq1} and \eqref{eq:ineq2}. This completes this proof. 

\subsection{Proof of Proposition~\ref{prop:noise-regret-bound}} \label{apd:noise-regret-bound}

Define $\hat{f}_t(\thetab) = f_t(\thetab^t) + (\mub^t)^\top (\thetab-\thetab^t)$.
Since $\hat{f}_t(\thetab)$ is convex and $\nabla \hat{f}_t(\thetab) = \mub^t $, applying Lemma~\ref{lemma:regret-bound}, we have
\begin{equation} \label{eq:conve-OL}
    \frac{1}{T} \sum_{t=1}^T \hat{f}_t(\thetab^t) - \frac{1}{T} \sum_{t=1}^T \hat{f}_t(\thetab) \leq \frac{\eta W}{2} +  \frac{M}{T\eta},
\end{equation}
where we have used \eqref{eq:boundedness} and convexity of $\Theta$. We know that the convexity of $f_t$ implies (see, for example, \cite{zhou2018fenchel}) the following
\begin{equation} \label{eq:f-convex}
    f_t(\thetab^t) + \nabla f_t(\thetab^t)^\top (\thetab-\thetab^t) \leq f_t(\thetab).
\end{equation} 
Therefore, we have
\begin{align*}
    &\frac{1}{T} \sum_{t=1}^T f_t(\thetab^t) - \frac{1}{T} \sum_{t=1}^T f_t(\thetab) \\
    & \stackrel{\hat{f}_t(\thetab^t) =  f_t(\thetab^t)}{=} \frac{1}{T} \sum_{t=1}^T \hat{f}_t(\thetab^t) - \frac{1}{T} \sum_{t=1}^T \hat{f}_t(\thetab) + \frac{1}{T} \sum_{t=1}^T \hat{f}_t(\thetab) - \frac{1}{T} \sum_{t=1}^T f_t(\thetab) \\
    & \stackrel{\eqref{eq:conve-OL}}{\leq} \frac{\eta W}{2} +  \frac{M}{T\eta} + \frac{1}{T} \sum_{t=1}^T \hat{f}_t(\thetab) - \frac{1}{T} \sum_{t=1}^T f_t(\thetab) \\
    & \stackrel{\eqref{eq:f-convex}}{\leq} \frac{\eta W}{2} +  \frac{M}{T\eta} + \frac{1}{T} \sum_{t=1}^T (\mub^t - \nabla f_t(\thetab^t))^\top(\thetab-\thetab^t) \\
    & = \frac{\eta W}{2} +  \frac{M}{T\eta} + \frac{1}{T} \sum_{t=1}^T (\mub^t - \nabla f_t(\thetab^t)-\betab_t(\thetab^t))^\top(\thetab-\thetab^t) \\
    & \qquad \qquad + \frac{1}{T} \sum_{t=1}^T \betab_t(\thetab^t)^\top (\thetab-\thetab^t).
\end{align*}
Since the function $h$ is differentiable and strongly convex, there exists $\thetab^\prime$ such that $\nabla h(\thetab^\prime) =\zero$. Thus, for all $t$, we have
\begin{multline} \label{eq:theta-bound}
\frac{1}{4}\| \thetab - \thetab^t \|^2 
 \leq \frac{1}{2}\| \thetab - \thetab^\prime \|^2 + \frac{1}{2} \| \thetab^t - \thetab^\prime \|^2 \leq B_h(\thetab, \thetab^\prime) + B_h(\thetab^t, \thetab^\prime) \\ 
\stackrel{\nabla h(\thetab^\prime) =\zero}{\leq} 2 \sup_{\thetab \in \Theta}  h(\thetab) \stackrel{\eqref{eq:boundedness}}{\leq} 2M,
\end{multline}
where the first inequality holds by $(a+b)^2\leq 2 a^2 + 2 b^2$ 
and the second inequality follows by the fact that $h$ is 1-strongly convex w.r.t $\|\cdot\|$.

Using Jensen's inequality, we have
\begin{equation} \label{eq:jensen}
    \sup_t \|\nabla f_t(\thetab^t)+\betab_t(\thetab^t)\|_\ast = \sup_t \|\EE[\mub^t|\thetab^t]\|_\ast \leq \EE[\sup_t \|\mub^t\|_\ast|\thetab^t] \stackrel{\eqref{eq:boundedness}}{\leq}  W,
\end{equation} 
and Holder's inequality implies that 
\begin{equation} \label{eq:holder}
    \begin{split}
        & \ \ \ |(\mub^t - \nabla f_t(\thetab^t) - \betab_t(\thetab^t))^\top(\thetab-\thetab^t)| \\
        &\stackrel{\eqref{eq:theta-bound}}{\leq} \| \mub^t - \nabla f_t(\thetab^t)  -\betab_t(\thetab^t) \|_\ast \cdot 2 \sqrt{2M} \\
        &\leq 
        (\| \mub^t\|_\ast + \| \nabla f_t(\thetab^t) + \betab_t(\thetab^t) \|_\ast) \cdot 2 \sqrt{2M} \stackrel{\eqref{eq:jensen}, \eqref{eq:theta-bound}}{\leq} 4W\sqrt{2M}.
    \end{split}
\end{equation}	
Furthermore, due to \eqref{eq:holder} and the fact that 
\[\EE [(\mub^t - \nabla f_t(\thetab^t)-\betab_t(\thetab^t))^\top(\thetab-\thetab^t)|\thetab^t] =0,
\]
$(\mub^t - \nabla f_t(\thetab^t)-\betab_t(\thetab^t))^\top(\thetab-\thetab^t)$ is a bounded martingale difference. Therefore, by the Hoeffding-Azuma inequality, we get
\begin{equation*}
    \PP \left\{\sum_{t=1}^T (\mub^t - \nabla f_t(\thetab^t) - \betab_t(\thetab^t))^\top(\thetab-\thetab^t) \geq \varepsilon \right\} \leq \exp\left( \frac{-\varepsilon^2}{64 T W^2M} \right).
\end{equation*}
The proposition follows by setting $\varepsilon = 8W\sqrt{T M \log (1/\delta)}$ and  $\eta = \sqrt{\frac{M}{TW}}$.

\subsection{Proof of Proposition~\ref{prop:linearization}} \label{apd:linearization}

Since $\|\xb\|_2\leq 1$ and $b_i \in \{-1, 1\}$, 
we have $\| \nabla y(\thetab;\xb) \|_2 \leq 1$ as
\begin{equation} \label{eq:grad-y}
    \nabla y(\thetab;\xb) = \frac{1}{\sqrt{m}} \left( \ind\{ \ab_1^\top \xb >0 \} \xb^\top , \dots, \ind\{ \ab_m^\top \xb >0 \} \xb^\top \right)^\top.
\end{equation}
Following \cite{liu2019neural}, we prove the local linearization for neural network models. Given $\xb \in \Xcal$ and a layer $i$ such that $\ind\{\ab_i^\top \xb>0\} \neq \ind\{(\ab_i^0)^\top \xb>0\}$, applying the Cauchy–Schwarz inequality gives us
\begin{subequations}\label{eq:ind}
\begin{align} 
    & \quad \quad (\ab_i^\top \xb)((\ab_i^0)^\top \xb) <0 \label{eq:cond1} \\ 
    & \Longrightarrow |(\ab_i^0)^\top \xb| \stackrel{\eqref{eq:cond1}}{\leq} |(\ab_i^0 - \ab_i)^\top \xb| \stackrel{\|\xb\|_2 \leq 1}{\leq} \| \ab_i^0 - \ab_i \|_2.
\end{align}
\end{subequations}
Thus, we get
\begin{equation} \label{eq:step1}
    \begin{split}
        & \ \ \ |y(\thetab;\xb)-y^0(\thetab;\xb)| \\
        &= \frac{1}{\sqrt{m}} \left|\sum_{i=1}^m b_i (\ind\{\ab_i^\top \xb>0\} - \ind\{(\ab_i^0)^\top \xb>0\}) \ab_i^\top \xb \right| \\
        & \stackrel{\|\xb\|_2 \leq 1}{\leq} \frac{1}{\sqrt{m}} \sum_{i=1}^m \left| \ind\{\ab_i^\top \xb>0\} - \ind\{(\ab_i^0)^\top \xb>0\}\right| (|(\ab_i^0)^\top \xb| + \|\ab_i-\ab_i^0\|_2)  \\
        & \stackrel{\eqref{eq:ind}}{\leq} \frac{1}{\sqrt{m}} \sum_{i=1}^m \ind\{ |(\ab_i^0)^\top \xb| \leq \|\ab_i^0-\ab_i\|_2\}  (|(\ab_i^0)^\top x| + \|\ab_i-\ab_i^0\|_2)  \\
        &\leq \frac{2}{\sqrt{m}}  \sum_{i=1}^m \ind\{ |(\ab_i^0)^\top \xb| \leq \| \ab_i^0 -\ab_i \|_2 \}  \| \ab_i - \ab_i^0 \|_2,
    \end{split}
\end{equation}
where we have used $\ind\{|c|\leq d\}|c| \leq \ind\{|c|\leq d\} d$ for the last inequality.
	
For $v \sim \chi_d^2$, where $\chi^2_d$ denotes a chi-squared distribution with $d$ degrees of freedom, we have $\EE[ 1/v] = 1/(\text{deg}-2)$. Therefore, we have
\begin{equation} \label{eq:inverse-chi-squared distribution}
    \EE \|\ab_i^0\|_2^{-2} = \frac{1}{d-2}.
\end{equation}
Furthermore, applying Cauchy–Schwarz inequality \eqref{eq:step1}, we get
\begin{align}
\EE_{\xb, \thetab^0} &|y(\thetab;\xb)-y^0(\thetab;\xb)|^2 \\
& \leq \frac{4}{m} \EE_{\xb, \thetab^0} \left( \sum_{i=1}^m \ind\{ |(\ab_i^0)^\top x| \leq \| \ab_i^0 -\ab_i \|_2 \} \right) \left( \sum_{i=1}^m \| \ab_i - \ab_i^0 \|_2^2 \right).
\\
\intertext{Since $\|\thetab-\thetab^0\|_2 \leq D$, we further have}
& \leq \frac{4 D^2}{m} \sum_{i=1}^m \EE_{\xb, \thetab^0} \ind\{ |(\ab_i^0)^\top \xb| \leq \| \ab_i^0 -\ab_i \|_2 \}  .
\\
\intertext{Using Part 2 in Assumption~\ref{assumption:Regularity-Data} and the Cauchy–Schwarz inequality again, we have}
& \leq \frac{4 c D^2}{m}  \EE_{\thetab^0} \left[ \sum_{i=1}^m \frac{\| \ab_i^0 -\ab_i \|_2}{\|\ab_i^0\|_2} \right]  \\
& \leq \frac{4 c D^2}{m} \EE_{\thetab^0} \left( \sum_{i=1}^m \|\ab_i^0\|_2^{-2} \right)^{1/2} \left(\sum_{i=1}^m \| \ab_i^0 -\ab_i \|_2^2 \right)^{1/2} .
\\
\intertext{Using $\|\thetab-\thetab^0\|_2 \leq D$ we have}
& \leq \frac{4 c D^3}{m} \left( \sum_{i=1}^m \EE_{\thetab^0} \|\ab_i^0\|_2^{-2} \right)^{1/2} \\
& = \frac{4 c D^3}{m^{1/2}(d-2)^{1/2}},
\end{align}
where we have used Jensen's inequality for the square root function, expectation in the second to last inequality,
and \eqref{eq:inverse-chi-squared distribution} at the end.
Following a similar argument, we have
\begin{align*}
    \EE_{\xb, \thetab^0} &\| \nabla y(\thetab;\xb) - \nabla y^0(\thetab;\xb) \|_2 \\
    &\stackrel{\eqref{eq:grad-y}}{=} \frac{1}{m} \EE_{\xb, \thetab^0} \left\| \sum_{i=1}^m b_i (\ind\{\ab_i^\top \xb>0\} - \ind\{(\ab_i^0)^\top \xb>0\})  \xb \right\|_2 
     \\
    &\stackrel{\eqref{eq:ind}}{\leq} \frac{1}{m} \EE_{\xb, \thetab^0} \sum_{i=1}^m \ind\{ |(\ab_i^0)^\top \xb| \leq \| \ab_i^0 -\ab_i \|_2 \} \|\xb\|_2  
    \\
    & \leq \frac{1}{m} \EE_{\xb, \thetab^0} \sum_{i=1}^m \ind\{ |(\ab_i^0)^\top \xb| \leq \| \ab_i^0 -\ab_i \|_2 \}  .
    \intertext{Using $\|\xb\|_2\leq 1$ and part 2 in Assumption~\ref{assumption:Regularity-Data}, we have}
    \\
    & \leq \frac{c}{m} \EE_{\thetab^0} \sum_{i=1}^m \frac{\| \ab_i^0 -\ab_i \|_2}{\|\ab_i^0\|_2}      
    \\
    & \leq \frac{D c}{m} \left( \sum_{i=1}^m \EE_{\thetab^0} \|\ab_i^0\|_2^{-2} \right)^{1/2} \\
    & \stackrel{\eqref{eq:inverse-chi-squared distribution}}{=} \frac{Dc}{m^{1/2}(d-2)^{1/2}},    
\end{align*}
where the last inequality follows from Cauchy–Schwarz inequality, $\|\thetab-\thetab^0\|_2 \leq D$, and Jensen's inequality. This completes the proof.

\subsection{Proof of Theorem~\ref{thm:OL-NN}} \label{apd:OL-NN}

From the decomposition~\eqref{eq:decomposition}, it suffices to bound three terms $(I), (III), (III)$. The $(I)$ and $(III)$ in \eqref{eq:decomposition} can be bounded in a uniform manner. Note that $y^0$ depends on $\thetab^0$. 
In particular, given any $\thetab \in \Theta$,
since $f_t$ is convex for all $t$, we have that
\begin{align} 
\EE_{\xb, z, \thetab^0} &\left[ f_t(y(\thetab;\xb),z) - f_t(y^0(\thetab;\xb),z) \right] \\
&\stackrel{}{\leq} \sup_{t, y, z} |\nabla_y f_t(y, z)| \times \EE_{\xb,\thetab^0} | (y(\thetab;\xb) - y^0(\thetab;\xb))| 
\\
&\stackrel{\eqref{eq:thm-asp}}{\leq} L_f \EE_{x,\thetab^0} |y(\thetab^t;\xb) - y^0(\thetab^t;\xb)|. \\
\intertext{Using Jensen’s Inequality and Part 2 in Proposition~\ref{prop:linearization}, we have}
&\stackrel{}{\leq} L_f \left(\EE_{x,\thetab^0} | (y(\thetab^t;\xb) - y^0(\thetab^t;\xb))|^2 \right)^{1/2} 
\\
&\stackrel{}{=} O \left(\frac{L_f D^{3/2}}{m^{1/4}}\right).
\label{eq:I-III}
\end{align}
This implies that
\begin{equation}
\begin{split}
	(III) & = \frac{1}{T} \sum_{t=1}^T \EE_{\xb,z, \thetab^0}[f_t(y^0(\thetab;\xb),z)] - \frac{1}{T} \sum_{t=1}^T \EE_{\xb,z, \thetab^0}[f_t(y(\thetab;\xb),z)] \\
	& \leq \left| \frac{1}{T} \sum_{t=1}^T \EE_{\xb,z, \thetab^0}[f_t(y^0(\thetab;\xb),z)] - \frac{1}{T} \sum_{t=1}^T \EE_{\xb,z, \thetab^0}[f_t(y(\thetab;\xb),z)] \right| \\
	& \stackrel{}{\leq} \sup_t \EE_{\xb,z, \thetab^0}| [f_t(y^0(\thetab;\xb),z)] - f_t(y(\thetab;\xb),z)|.
\end{split}
\end{equation}
The term $(I)$ is controlled in a similar manner, and we get
\begin{equation*}
	(I) =  O \left(\frac{L_f D^{3/2}}{m^{1/4}}\right), \ \ \ (III) =  O \left(\frac{L_f D^{3/2}}{m^{1/4}}\right).
\end{equation*}

To apply Proposition~\ref{prop:noise-regret-bound} for the second term $(II)$, let $h(\thetab) = \frac{1}{2}\|\thetab -\thetab^0\|^2$. Then, $h^\ast(\thetab) = \frac{1}{2}\|\thetab\|^2 + \thetab^\top\thetab^0$, $B_h(\thetab, \thetab^\prime) = \frac{1}{2}\|\thetab - \thetab^\prime\|^2$, and \eqref{eq:md} reduces to Algorithm~\ref{alg:surrogat-based-optimizer}. We have $\sup_{\thetab \in \Theta} h(\thetab) = M \leq D^2/2$. Then, it suffices to show $\sup_t \|\mub^t \|_2 = W <L_f$,
where $\mub^t = \nabla f_t(y(\thetab^t;\xb_{t+1}), z_{t+1})$, 
\begin{equation}
    \EE \left[ \mub^t \mid \thetab^t \right] = \EE_{x,z,\thetab^0} [ \nabla f_t(y^0(\thetab^t;\xb), z) \mid \thetab^t] + \betab_t(\thetab^t),
\end{equation}
and
\begin{equation}
    \betab_t(\thetab^t) = \EE_{x,z,\thetab^0} [\nabla f_t(y(\thetab^t;\xb), z)  -  \nabla f_t(y^0(\thetab^t;\xb), z) \mid \thetab^t].
\end{equation}
We have
\begin{align*}
    \sup_t \|\mub^t \|_2
    & = \sup_t \| \nabla f_t(y(\thetab^t;\xb_{t+1}),z_{t+1}) \|_2 \\
    & = \sup_t \| \nabla_y f_t(y(\thetab^t;\xb_{t+1}),z_{t+1}) \nabla_{\thetab} y(\thetab^t;\xb_{t+1})\|_2, \\
\intertext{which can further be bounded using Part 1 in Proposition~\ref{prop:linearization} as}    
    & \stackrel{}{\leq}  \sup_{t, y \in \Ycal ,z \in \Zcal} | \nabla_y f_t(y,z) | 
    \\
    & \stackrel{\eqref{eq:thm-asp}}{\leq}  L_f.
\end{align*}
Applying Proposition~\ref{prop:noise-regret-bound} with $\eta = \sqrt{\frac{M}{TW}} = \sqrt{\frac{D^2}{2 L_f T}}$, we get
\begin{align*}
    (II) 
    &= \frac{1}{T} \sum_{t=1}^T \EE_{x, z, \thetab^0} [f_t(y^0(\thetab^t;\xb),z)] -  \frac{1}{T}  \sum_{t=1}^T  \EE_{x, z, \thetab^0} [f_t(y^0(\thetab;\xb),z)] \\
    &\leq \frac{3}{2} \sqrt{\frac{M W}{T}} + 8 W \sqrt{\frac{M \ln (1/\delta)}{T}}  + \frac{2\sqrt{2M}}{T} \sum_{t=1}^T \| \betab_t(\thetab^t) \|_2 \\
    &=  \frac{3D}{2}\sqrt{\frac{L_f}{2 T}} + 8L_f D \sqrt{\frac{\ln(1/\delta)}{2T}} + \frac{2D}{T} \sum_{t=1}^T \|\beta_{t}(\thetab^t)\|_2,
\end{align*}
with probability $1-\delta$.
Furthermore, using Jensen's inequality, we have
\begin{align*}
    \|\betab_t(\thetab)\|_2
    & \stackrel{\text{ }}{\leq} \EE_{x,z,\thetab^0} \| \nabla f_t(y(\thetab;\xb),z)- \nabla f_t(y^0(\thetab;\xb),z) \|_2 \\
    &= \EE_{x,z,\thetab^0} \| \nabla_y f_t(y(\thetab;\xb),z) \nabla_{\thetab} y(\thetab;\xb) 
    - \nabla_y f_t(y^0(\thetab;\xb),z) \nabla_{\thetab} y^0(\thetab;\xb)\|_2 \\
    & \leq \EE_{x,z,\thetab^0} | \nabla_y f_t(y(\thetab;\xb),z)-\nabla_y f_t(y^0(\thetab;\xb),z) | \| \nabla_{\thetab} y(\thetab;\xb) \|_2 \\
    &\quad  + \EE_{x,z,\thetab^0}|\nabla f_t(y^0(\thetab;\xb),z) | \| \nabla_{\thetab} y(\thetab;\xb)- \nabla_{\thetab} y^0(\thetab;\xb) \|_2, \\
\intertext{which can further be bounded by Part 1 in Proposition~\ref{prop:linearization} as }    
    & \leq \EE_{x,z,\thetab^0} | \nabla_y f_t(y(\thetab;\xb),z)-\nabla_y f_t(y^0(\thetab;\xb),z) |  \\
    & \qquad \qquad  + \EE_{x,z,\thetab^0}|\nabla f_t(y^0(\thetab;\xb),z) | \| \nabla_{\thetab} y(\thetab;\xb)- \nabla_{\thetab} y^0(\thetab;\xb) \|_2 \\
    &\stackrel{\eqref{eq:thm-asp}}{\leq} 2L_f^2 (\EE_{x,z,\thetab^0}| y(\thetab^t;\xb)- y^0(\thetab^t;\xb) | \\
    & \qquad \qquad + \EE_{x,z,\thetab^0}\| \nabla_{\thetab} y(\thetab^t;\xb)- \nabla_{\thetab} y^0(\thetab^t;\xb) \|_2)  \\
    &\stackrel{}{=} O \left( L_f^2 \left[ \frac{ D^{3/2}}{m^{1/4}} + \frac{ D}{m^{1/2}} \right] \right) = O \left(   \frac{L_f^2 D^{3/2}}{m^{1/4}} \right),
\end{align*}
where we used \eqref{eq:I-III} and Parts 2 and 3 in Proposition~\ref{prop:linearization} at the end.
Combining all the results, we get
\begin{equation*}
		\begin{split}
			\frac{1}{T} \sum_{t=1}^T &\EE_{x, z, \thetab^0} [f_t(y^0(\thetab^t;\xb),z)] -  \min_\theta\frac{1}{T}  \sum_{t=1}^T  \EE_{x, z, \thetab^0} [f_t(y^0(\thetab;\xb),z)] \\
			&= O \left( D\sqrt{\frac{L_f}{T}} + L_f D \sqrt{\frac{\ln(1/\delta)}{T}} + \frac{L_f^2 D^{3/2}}{m^{1/4}} \right), 
    \end{split}
\end{equation*}
which finishes the proof.

\subsection{Proof of Theorem~\ref{thm:main-thm}} \label{apd:main-thm}
	
Since \eqref{eq:optimal-2} and \eqref{eq:feasible-2} immediately follow by Proposition~\ref{thm:equilibrium-optimization}, it suffices to prove the regret bounds in \eqref{eq:equilibrium}.
For $\thetab$, we will use the result in Theorem~\ref{thm:OL-NN} and justify its conditions.
For $\xib$ and $\lambdab$, we could apply Proposition~\ref{prop:noise-regret-bound}, because their objective functions are convex.
	
We first prove that $\thetab^t$ achieves its approximate coarse-correlated equilibrium in \eqref{eq:OL-theta}. It suffices to verify condition \eqref{eq:thm-asp} in Theorem~\ref{thm:OL-NN}, and that the boundedness condition is satisfied by the assumption on $\Theta$ in the first statement in Assumption~\ref{asp:1} by letting $h(\thetab) = \frac{1}{2} \|\thetab-\thetab^0\|^2$. Let
\begin{equation}
    f_t(y, z) = h_0(y,z) + \sum_{k=1}^K \lambda_{J+k}^t \tilde{h}_k(y, z).
\end{equation}
By Assumption~\ref{assumption:Regularity-OC}, we have
\begin{equation*}
    \begin{split}
        |\nabla_y f_t(y,x)|
        \leq \left| \nabla_y h_0(y,z) \right| +  \sum_{k=1}^K \lambda_{J+k}^t \left| \nabla_y \tilde{h}_k(y,z) \right| 
        = (1+K\kappa) L \leq 2K\kappa L
    \end{split}
\end{equation*}
and
\begin{equation*}
    \begin{split}
        |\nabla_y &f_t(y_1,x) - \nabla_y f_t(y_2,x)| \\
        &\leq \left| \nabla_y h_0(y_1,z) - \nabla_y h_0(y_2,z)\right| +  \sum_{k=1}^K \lambda_{J+k}^t \left| \nabla_y \tilde{h}_k(y_1,z) - \nabla_y \tilde{h}_k(y_2,z) \right| \\
        &= (1+K\kappa) L |y_1-y_2| \leq  2 K L \kappa |y_1-y_2|.
    \end{split}
\end{equation*}
Therefore, using Theorem~\ref{thm:OL-NN}, with probability $1-\delta$ and $\eta_{\theta}= \sqrt{D^2/[4K\kappa L T] } $, we have
\begin{multline*}
    \frac{1}{T} \sum_{t=1}^T  \tilde{\Lcal}_2(\thetab^t, \lambdab^t) - \min_{\thetab \in \Theta} \frac{1}{T} \sum_{t=1}^T \tilde{\Lcal}_2(\thetab,\lambdab^t) 
    \\
    = O\left( K L \kappa D \sqrt{\frac{\ln(1/\delta)}{T}} +  \frac{ (K L \kappa)^2 D^{3/2}}{m^{1/4}}\right),
\end{multline*}
which proves \eqref{eq:OL-theta} with 
\begin{equation}
    \epsilon_\theta 
    = O\left( \kappa D \sqrt{\frac{\ln(1/\delta)}{T}} +  \frac{  \kappa^2 D^{3/2}}{m^{1/4}}\right).
\end{equation}
	
Then, we prove \eqref{eq:OL-xi} for $\xib$. 
Note that Proposition~\ref{prop:y-bd} implies that 
\[
\sup_{\thetab \in \Theta}|y(\thetab;\xb)|<2D
\]
with probability $1-B/D$. On the event $\sup_{\thetab \in \Theta} y(\thetab;\xb) \in \Ycal = \{y:|y|<2D\}$, Assumption~\ref{assumption:Regularity-OC} implies
	\begin{equation*}
	\begin{split}
		& \ \ \sup_t \|\nabla_{\xib} \Lcal_1(\xib^t,\lambdab^t)\|_2 \\
		& \leq \sup_{t, \xib \in \Xi, \lambdab \in \Lambda} \left\| \left( \sum_{j=1}^J \lambda_{j} \nabla_{\xi_1} g_j(\xib) - \lambda_{J+1}, \dots, \sum_{j=1}^J \lambda_{j} \nabla_{\xi_K} g_j(\xib) - \lambda_{J+K} \right)^\top \right\|_2 \\ 
		&\leq \sup_{t, \xib \in \Xi, \lambdab \in \Lambda} \sum_{k=1}^K \left| \sum_{j=1}^J \lambda_{j} \nabla_{\xi_k} g_j(\xib) -  \lambda_{J+k} \right| \\
		&\leq \kappa K(J L+1) \leq 2 \kappa J L K.
	\end{split}
	\end{equation*} 
By Assumption~\ref{asp:1}, we know $\sup_{y \in \Ycal, z \in \Zcal} |\tilde{h}_k| \leq C$. Thus, Proposition~\ref{prop:noise-regret-bound} with $M = C\sqrt{K}, W= 2 \kappa J L K$ implies
\begin{equation}
    \frac{1}{T} \sum_{t=1}^T  \Lcal_1(\xib^t, \lambdab^t) - \min_{\xib \in \Xi} \frac{1}{T} \sum_{t=1}^T \Lcal_1(\xib, \lambdab^t) 
    = O\left(  \kappa J L K^{5/4} \sqrt{\frac{C \ln(1/\delta)}{T}}\right),
\end{equation}
with probability $1-\delta$ and $\eta_\xi = \sqrt{\frac{C}{2 T\kappa J L K^{1/2}}}$. 
Therefore, \eqref{eq:OL-xi} follows with
\begin{equation}
    \epsilon_\xi = O\left(\frac{\kappa   \sqrt{\ln(1/\delta)}}{\sqrt{T}} \right).
\end{equation}
	
Finally, we prove that $\lambdab^t$ satisfies \eqref{eq:OL-lambda}.
We have
\begin{equation*}
\begin{split}
    \sup_t &\|\hat{\nabla}_{\lambdab} \Lcal(\thetab^t, \xib^t,\lambdab^t)\|_2 \\ 
    &= \sup_{\thetab \in \Theta, \xib \in \Xi} \| ( g_1(\xib) , \dots, g_J(\xib))^\top\|_2 \\ 
    & \qquad \qquad + \sup_{\thetab \in \Theta, \xib \in \Xi}  \|
    \{h_j(y(\thetab;\xb_{t+1, j}),z_{t+1, j})  - \xi_{j}\}_{j=1}^K    
    % (h_1(y(\thetab;\xb_{t+1, 1}),z_{t+1, 1}^\prime)  - \xi_{1}, \dots, h_K(y(\thetab;\xb_{t+1, K}),z_{t+1, K})  -\xi_{K}  )^\top 
    \|_2 \\
    & \leq J C + 2KC   \leq 2 K J C L
\end{split}
\end{equation*}
by Assumption~\ref{asp:1} and $\|\lambdab^t\|_2 \leq \kappa \sqrt{K+J} \leq \kappa \sqrt{KJ}$.
Using Proposition~\ref{prop:noise-regret-bound} with $M = \kappa \sqrt{KJ}, W=2 K J C L$, we get 
\begin{equation*}
    \frac{1}{T} \sum_{t=1}^T  \Lcal(\thetab^t, \xib^t, \lambdab^t) - \min_{\lambdab \in \Lambda} \frac{1}{T} \sum_{t=1}^T \Lcal(\thetab^t, \xib^t,\lambdab) 
    = O\left( K^{5/4} J^{5/4} C L \sqrt{ \frac{\kappa \ln(1/\delta)}{T}}\right),
\end{equation*}
with probability $1-\delta$,
which implies \eqref{eq:OL-lambda} with
\begin{equation*}
    \epsilon_\lambda = O\left( \sqrt{ \frac{\kappa \ln(1/\delta)}{T}} \right),
\end{equation*}
by setting $\eta_{\lambda} = \sqrt{\kappa/(2 L C  K^{1/2} J^{1/2} T)}$.

The theorem follows by the union bound.

%% if your bibliography is in bibtex format, unco mment commands:
\bibliographystyle{imsart-number} % Style BST file (imsart-number.bst or imsart-nameyear.bst)
\bibliography{ref}       % Bibliography file (usually '*.bib')

%% or include bibliography directly:
% \begin{thebibliography}{}
% \bibitem{b1}
% \end{thebibliography}

\end{document}